\crefname{assumption}{Assumption}{Assumptions}
\crefname{equation}{Eq.}{Eqs.}
\crefname{figure}{Fig.}{Figs.}
\crefname{table}{Table}{Tables}
\crefname{section}{Sec.}{Secs.}
\crefname{algorithm}{Algorithm}{Algorithms}
\crefname{theorem}{Thm.}{Thms.}
\crefname{lemma}{Lemma}{Lemmas}
\crefname{proposition}{Prop.}{Propositions}
\crefname{corollary}{Cor.}{Cors.}
\crefname{example}{Example}{Examples}
\crefname{appendix}{Appendix}{Appendixes}
\crefname{remark}{Remark}{Remark}
\newcommand{\note}[1]{{\textbf{\color{red}#1}}}
\newcommand{\A}{\operatorname{A}}
\newcommand{\B}{\operatorname{B}}
\newcommand{\C}{\operatorname{C}}
\newcommand{\D}{\operatorname{D}}
\newcommand{\PP}{\operatorname{P}}
\newcommand{\Q}{\operatorname{Q}}
\newcommand{\density}[2]{\mu(#1|#2)}
\newcommand{\survival}[2]{S(#1|#2)}
\providecommand{\customgenericname}{}
\newcommand{\newcustomtheorem}[2]{%
  \newenvironment{#1}[1]
  {%
   \renewcommand\customgenericname{#2}%
   \renewcommand\theinnercustomgeneric{##1}%
   \innercustomgeneric
  }
  {\endinnercustomgeneric}
}
\newcounter{remark}[section]
\newcommand{\Cind}{\operatorname{C}}
\newcommand{\calX}{\mathcal{X}}
\newcommand{\calZ}{\mathcal{Z}}
\newcommand{\calE}{\mathcal{E}}
\newcommand{\calC}{\mathcal{C}}
\newcommand{\calS}{\mathcal{S}}
\newcommand{\calR}{\mathcal{R}}
\DeclareMathAlphabet{\mathsfsl}{OT1}{cmss}{m}{sl}
\renewcommand{\phi}{\varphi}
\newcommand{\Rspace}[1]{\mathbb{R}^{#1}}
\newcommand*{\defeq}{\mathrel{\vcenter{\baselineskip0.5ex \lineskiplimit0pt
                     \hbox{\scriptsize.}\hbox{\scriptsize.}}}%
                     =}
\newcommand{\Prob}[1]{\mathbb{P}\left\{ {#1} \right\}}
\newcommand{\Expect}{\operatorname{\mathbb{E}}}
\theoremstyle{plain}  
\newtheorem{theorem}{Theorem}[section]
\newtheorem{definition}[theorem]{Definition}
\newtheorem{lemma}[theorem]{Lemma}
\newtheorem{proposition}[theorem]{Proposition}
\newtheorem{assumptionA}{Assumption}
\newtheorem{assumptionB}{Assumption}
\newtheorem{assumptionC}{Assumption}
\newtheorem{assumptionD}{Assumption}
\begin{document}

%

%

\twocolumn[

\aistatstitle{A Statistical Learning Take on the Concordance Index for Survival Analysis}

\aistatsauthor{ Alex Nowak-Vila \And Kevin Elgui \And Genevi\`eve Robin  }

\aistatsaddress{Owkin Inc. \And Owkin Inc. \And Owkin Inc.} ]

\begin{abstract}
The introduction of machine learning (ML) techniques to the field of survival analysis has increased the flexibility of modeling approaches, and ML based models have become state-of-the-art. These models optimize their own cost functions, and their performance is often evaluated using the concordance index (C-index). From a statistical learning perspective, it is therefore an important problem to analyze the relationship between the optimizers of the C-index and those of the ML cost functions. We address this issue by providing C-index Fisher-consistency results and excess risk bounds for several of the commonly used cost functions in survival analysis. We identify conditions under which they are consistent, under the form of three nested families of survival models. We also study the general case where no model assumption is made and present a new, off-the-shelf method that is shown to be consistent with the C-index, although computationally expensive at inference. Finally, we perform limited numerical experiments with simulated data to illustrate our theoretical findings.
\end{abstract}

\section{INTRODUCTION}

Survival analysis \citep{Gross1981SurvivalMA, Kalbfleisch2002}, the field of statistics concerned with modeling time-to-event data, is central to healthcare applications to predict time from diagnosis to death or risk of disease recurrence. Rather than directly modeling time-to-event, many survival models predict risk of event occurrence \citep{haider2020effective}.  Many definitions of risk can be found in the literature; the most classic are the expected time-to-event, the probability of an event occurring after a given time, or the multiplicative factor in the hazard rate under the proportional hazards (PH) assumption. Importantly, survival data are often \textit{right-censored}, and only a lower bound on the time-to-event is observed; it usually corresponds to the time at which patients leave the study. Most classical survival models have therefore been extended to the censored case (see, e.g.~\citet{klein03} Chapter 3 for a review of the different types of censoring and Chapter 4 for survival estimation in the censored case). 

Machine learning models are increasingly used in survival analysis and have shown state-of-the-art results in various application areas~\citep{zhu_deep_2016, zhu_wsisa_2017, yousefi_predicting_2017, katzman_deepsurv_2018, ching_cox-nnet_2018, kvamme_time_event_2019, barnwal2020survival, steingrimsson2020deep, IDnetwork_cottin, schutte_artificial_2022}. Evaluating this jungle of risk models is therefore an important issue to make it comprehensive for practitioners~\citep{Park2021ReviewOS}. Among existing metrics in survival analysis, the concordance index (C-index) is probably the most commonly used~\citep{Harrell1996MultivariablePM}. It can be viewed as an extension of the Area Under the ROC Curve (AUC) for continuous outcomes and assesses the ability of a risk prediction method to correctly rank individuals according to their risk scores. More specifically, it is defined as the probability that pairs of predicted risks are ranked in the same order as the corresponding observed time-to-events. 

From a statistical learning perspective, the question arises whether the C-index can be directly optimized, i.e., used as an objective function to be maximized in an ML approach. Unfortunately, the C-index is a non-concave, discontinuous loss with respect to the parameters of the risk model; consequently, gradient-based methods cannot be used directly to maximise it. In practice, models are often learned by minimizing a smooth surrogate loss on the training data and then evaluated using the C-index. Examples of training losses include the negative log-likelihood of survival models such as~\cite{coxPH} Proportional Hazards (PH) or Accelerated Failure Time (AFT) models~\citep{weiAFT}, loss functions defined as the expectation of an error measure between the time to event and the risk predictor \citep{steingrimsson2020deep}, or smooth approximations of the negated C-index \citep{chen2013gradient}.

Despite the widespread use of the C-index as an evaluation measure, the relationship between optimizers of these training losses and those of the C-index is not well understood. In particular, it is not known under what conditions \textit{Fisher consistency}~\citep{FisherOnTM}, also known as classification-calibration~\citep{bartlett2006convexity}, holds, i.e., minimizers of the training loss correspond to optimizers of the C-index. If this property holds, we can safely say that the ML model converges to the optimal C-index as sample size grows to infinity—if the model is expressive enough. 

The aim of this paper is to answer this very question. We study the consistency properties of classical cost functions in survival analysis with respect to the C-index, and provide associated excess risk bounds. We analyze in particular the properties of Maximum Likelihood Estimation (MLE), conditional average risk estimation, and smooth C-index maximization. We identify conditions under which these methods are consistent, under the form of three nested families of survival models. In addition, we study the more general case where no model assumption is made. In this case, we present a new, off-the-shelf convex method that is shown to be consistent with the C-index, although computationally expensive at inference. Finally, we perform limited numerical experiments with simulated data to illustrate our theoretical findings. In all cases, we discuss how censoring can be incorporated in our results. Note that, most of the theoretical results can be applied beyond survival analysis to any continuous ranking task in the sense of \cite{clemenccon2018ranking}. Specifically, the following contributions are provided:
\begin{itemize}
    \setlength\itemsep{1em}
    \item The properties of commonly used risk estimation procedures in survival analysis are analyzed in terms of Fisher-consistency and C-index excess risk bounds. 
    \item Conditions under which these procedures are Fisher-consistent with respect to the C-index are derived in the form of three nested families of survival models corresponding to increasingly stringent model assumptions. For each family, we characterize the maximizers of the C-index, and provide important examples.
    \item We discuss a novel, off-the-shelf convex estimation method which, although computationally expensive at inference, proves consistent without any modeling assumption. 
    \item Limited experiments are conducted with simulated data to
    illustrate our theoretical findings.
\end{itemize}

\noindent\textbf{Related Work}

This work is in line with an extensive literature on the statistical efficiency of minimizing surrogate losses of nonconvex and discontinuous evaluation metrics such as the 0-1 loss.~\citet{bartlett2006convexity} derives upper bounds on the excess risk of convex surrogates for binary classification, while~\cite{agarwal2014surrogate} provides similar results for surrogates of bipartite ranking losses.~\cite{cortes2003auc} provides a statistical analysis of the relationship between AUC and error rate minimization, and ~\cite{auc_consistency_pairwise} identifies sufficient conditions for consistency of pairwise surrogate losses with the AUC. From an optimization point of view, ~\cite{calders2007efficient} suggests maximizing the AUC directly using polynomial approximations.

While the C-index is widely used in survival analysis and several papers have investigated its properties from a practical point of view~\citep{longato_practical_2020, Park2021ReviewOS}, there are comparatively few statistical learning results evaluating its relationship to commonly used cost functions used in survival analysis.~\cite{steck2007ranking} provide lower bounds on the C-index that can be directly optimized and examine their relationship to Cox's proportional hazards, showing that this popular model approximately maximizes the C-index. The authors do not examine consistency.~\cite{chen2013gradient} develop a gradient-boosting procedure to optimize smooth surrogates of the C-index; however, the statistical consistency of such surrogates remains to be analyzed.

Since the C-index is fundamentally a ranking measure, our work has similarities with the extensive literature on ranking algorithms and their statistical properties~\citep{clemenccon2008ranking, duchi2010consistency, chapelle2011future, rajkumar2014statistical, yuan2016druge, he2018drug, ai2019learning, wu2021neurank, werner2021review}. As far as we are aware, none of these papers analyze ranking algorithms in terms of consistency with the C-index. \cite{clemenccon2018ranking} examines the optimizers of the C-index, but they focus only on the case where the conditional cumulative distribution functions do not cross one another; we consider much less restrictive assumptions. 

\section{SETTING AND BACKGROUND}
\label{sec:setting}


\subsection{Survival Analysis}\label{subsec:survivalanalysis}
Consider the classical survival analysis framework to model time-to-events and their relationship to individual covariates. The time-to-event, denoted $T$, is assumed continuous and takes values in $\Rspace{}_+$; individual covariates are denoted by $X$ and take values in $\mathcal{X}\subset\Rspace{d}$. 
Let $\PP = \operatorname{Prob}(\Rspace{}_{+}\times \calX)$ be the space of joint probability densities~$\mu$ of non-negative time-to-events and covariates; $\mu(t,x)$ is also referred to as the \textit{survival model}. 
The density of events conditional on covariates $x$ is denoted $\density{t}{x}$, and the conditional survival function
\begin{equation}
    \label{eq:survival-curve}
    \survival{t}{x} = \Prob{T>t|X=x} = \int_t^{+\infty}\mu(t|x)dt.
\end{equation}
We also consider the right-censored setting where the time-to-event $T$ is not directly observed but rather a lower bound~$U = C \wedge T$ where~$C$ is a continuous, nonnegative random variable corresponding to the censoring time. The binary random variable $\Delta = 1(C\geq T)$ specifying whether the lower bounds corresponds to the time-to-event or to the censoring time is also observed. Throughout this paper, the censoring is assumed independent of the covariates, i.e. $C \perp X$, and the censoring curve is defined by $G(t) = \Prob{C>t}$.

\subsection{Concordance Index}\label{sec:cindex}

Consider a scalar valued function 
$$f: x\in\Rspace{d}\mapsto f(x)\in\Rspace{};$$
$f$ may for instance come from an inference procedure assessing the risk of occurrence of events, depending on covariates $x$. In such settings, the higher $f(x)$, the smaller the time-to-event. Many quantities may be used to define the risk; for instance, the conditional expectation of the time-to-event $\Expect{}\{T|X=x\}$, the probability of an event occurring beyond a certain time $t_0$, $\Prob{T>t_0|X=x}$, or a statistic specific to a survival model such as the multiplicative factor of the baseline hazard under the PH assumption.  

The C-index is defined as the probability of having a pairwise concordant order between the predicted risks and the observed time-to-events \citep{harrell1982evaluating}. It is usually presented as the following conditional probability:
\begin{equation}\label{eq:cindex}
    \Cind(f) = \Prob{f(X) < f(X')~|~T>T'}.
\end{equation}
The C-index depends on the joint distribution of $(T,X)$, so that two survival models $\mu$ and $\mu'$ yield two different definitions of the C-index $C_{\mu}$ and $C_{\mu'}$. Whenever the model $\mu$ is clear from context, we drop the subscript for ease of notation. Since the C-index measures the quality of the ranking induced by $f$ rather than the risk values themselves,
it is defined up to monotone transformation of the risk.

For any pair of real random variables $Y$ and $Z$, the \textit{statistical preference} order~\citep{taplin1997preference} denoted by $\succeq$, is defined as
\begin{equation}
\label{eq:statistical-preference}
    Y \succeq Z \iff \Prob{Y>Z} \geq \frac{1}{2}.
\end{equation}
A risk function $f$ defining a global ordering maximally preserving the statistical preference~\eqref{eq:statistical-preference} in expectation for all pairs of conditional random variables~$T|X=x$ and~$T|X=x'$ is optimal with respect to the C-index. This follows directly from the fact that: 
\begin{align}
    \Cind(f) &\propto \Prob{f(X)<f(X'), T>T'} \notag \\
    &= \Expect{1(f(X)<f(X'))1(T>T')} \notag \\
    &= \Expect_{X,X'}{\Prob{T>T'|X,X'}1(f(X)<f(X'))} \label{eq:mwfas-expression}.
\end{align}
\Cref{def:optimal-risk-ordering} presents the case where there exists a global ordering respecting \textit{all} pairwise comparisons under \eqref{eq:statistical-preference}.

\begin{definition}[Optimal risk ordering]\label{def:optimal-risk-ordering} An \textit{optimal risk ordering} is a function $f^\star$ satisfying 
\begin{equation}\label{eq:fullrankingcondition}
    f^\star(x) \leq f^\star(x') \Rightarrow \Prob{T>T'|x,x'} \geq \frac{1}{2},
\end{equation}
for all pairs $x, x'$. Note that if condition \eqref{eq:fullrankingcondition} is satisfied then it follows directly that $f^\star$ is an optimizer of the C-index and it only depends on the conditional density of events $\mu(t|x)$.
\end{definition}
We show in \cref{sec:maximizers-cindex} that an optimal risk ordering does not exists in general. 

\noindent\textbf{C-index estimation with right-censored data}

In the survival analysis setting introduced in \cref{subsec:survivalanalysis}, time-to-events $T$ are not observed but rather a lower bound $U = C \wedge T$ \footnote{$a\wedge b = \min(a, b)$.} and an event indicator $\Delta = 1(C\geq T)$. In this case the question arises of how to consistently estimate the C-index of a risk model $f$. Using the Inverse Censored Probability Weighting (IPCW) strategy of \citet{robins2000IPCW}, one can see that 
\begin{align*}
    \Expect{}\{1(T > T')\} &= \Expect\left\{ \frac{1(T > T') 1(C\wedge C'\geq T')}{G(T')^2} \right\} \\
    &= \Expect{}\left\{\frac{\Delta'1(U> U')}{G(U')^2}\right\},
\end{align*}
where $G$ is the censoring curve defined at the end of \cref{subsec:survivalanalysis}. In particular, this leads to the following expression for the C-index as an expectation over $(X, U, \Delta)$:
\begin{equation*}
    C(f) = \Expect{}\left\{\frac{\Delta'1(U>U') 1(f(X)<f(X'))}{G(U')^2}\right\}
\end{equation*}
The C-index can be consistently estimated from data using the empirical average instead of the expectation; this estimator is known as Uno's C-index \citep{uno2011c}. 

\subsection{Fisher Consistency}
\label{sec:fisherconsistency}

As previously discussed, the C-index \eqref{eq:cindex} cannot be maximized using gradient descent. Instead, the function $f$ is learned by minimizing a \textit{smooth} risk $\calR(f)$. Fisher consistency is a property guaranteeing that the minimizers of the smooth risk are also maximizers of the C-index. \Cref{def:consistency} formalizes the notion of Fisher consistency over a family of distributions.

\begin{definition}[Fisher Consistency]\label{def:consistency} The risk $\calR$ is said to be Fisher consistent to the C-index under a distribution family $\Q\subseteq \PP$ if
\begin{equation*}
    \calR_{\mu}(f^\star) = \min_{f}~\calR_{\mu}(f) \implies C_{\mu}(f^\star) = { \max_{f}}~C_{\mu}(f),
\end{equation*}
for all distribution $\mu\in\Q$, 
where $\calR_{\mu}, C_{\mu}$ are the smooth risk and C-index computed over the joint distribution $\mu$. 
\end{definition}
In this paper, pairs of smooth risks and families of distributions are examined for which Fisher consistency holds for the C-index. 

\noindent\textbf{Consistency for AUC with binary labels}
The C-index is a continuous outcome version of the well-known AUC used in binary output ranking problems, also known as bipartite ranking. AUC is defined as the pairwise probability of concordant order between the risk and the binary labels:
\begin{equation}\label{eq:auc}
    \operatorname{AUC}(f) = \Prob{f(X)>f(X')|Y=1,Y'=0},
\end{equation}
where $Y,Y'\in\{0,1\}$ are binary. The loss is non-continuous and cannot be optimized directly by gradient descent. In this case, however, the set of optimizers of \eqref{eq:auc} can be easily characterized as monotone transformations of the conditional distribution~\citep{agarwal2014surrogate}. This can be seen as the function $f(x) = \Prob{Y=1|X=x}$ satisfies  
\begin{equation*}
    \Prob{Y>Y'|x,x'} \geq \frac{1}{2} \iff f(x) \geq f(x'),
\end{equation*}
which directly follows from the identity $f(x)(1-f(x')) = \Prob{Y>Y'|x,x'}$. Hence, any smooth risk $\calR$ whose minimizer is a monotone transformation of the conditional distribution is consistent to the AUC. This includes least squares, logistic regression, and more generally any proper loss function~\citep{agarwal2014surrogate}.

\section{Maximizers of C-index}\label{sec:maximizers-cindex}

The problem of maximizing the C-index associated to a survival model $\mu$ writes
\begin{equation}
    \label{eq:C-maximization}
    C^{\star}_{\mu} = \max_{f} C_{\mu}(f).
\end{equation}
The aim of this section is to classify the set of possible survival models $\mu$—the joint density of $(T,X)$—in terms of properties of the associated maximizers of~\eqref{eq:C-maximization}. We introduce four families of survival models, denoted by $\A \subsetneq \B \subsetneq \C$ and $\D\defeq \C^c = \PP\setminus \C$, defined informally as follows:
\begin{itemize}
\setlength\itemsep{0cm}
    \item $\A$: Survival curves $S(t|x)$ do not cross. Existing work studying consistency with respect to C-index, such as~\citet{clemenccon2013ranking,clemenccon2018ranking} are limited to this family of models.
    \item $\B$: $-\Expect\{T|X=x\}$ is an optimal risk ordering~\eqref{eq:fullrankingcondition}. We prove in \Cref{sec:consistency-excess} that a large family of smooth risk functions is consistent in this family. 
    \item $\C$: There exists an optimal risk ordering satisfying~\eqref{eq:fullrankingcondition}. We prove in \Cref{sec:consistency-excess} that MLE is Fisher-consistent for several examples of models in family $\C$.
    \item $\D$: There is no optimal risk ordering satisfying~\eqref{eq:fullrankingcondition}. We introduce in \Cref{sec:consistency-excess} an off-the-shelf estimation method which proves consistent in this setting, although being computationally expensive at inference.
\end{itemize}
In the following, the four families of survival models are described alongside examples and theoretical results characterizing the associated oracle C-index maximizers. 

\subsection*{A. Conditional Survival Curves do not Cross}
Family $\A$ is defined as the set of survival models whose conditional survival curves uniformly bound one another. In other words, models $\mu\in\A$ satisfy \cref{ass:A}.
\begin{assumptionA}
\label{ass:A}
For all $(x,x')\in\calX^2$, $t\mapsto S(t|x)-S(t|x')$ has constant sign.
\end{assumptionA}
 Under \Cref{ass:A}, \Cref{th:assumption-A} shows that the negative conditional expectation $-\operatorname{CE}(x) = \Expect{}\{T~|~X=x\}$ satisfies Condition~\eqref{eq:fullrankingcondition}; the proof can be found in the Appendix. 
\begin{theorem}\label{th:assumption-A}
If $\mu\in\A$, the negative conditional expectation
is an optimal risk ordering for the C-index satisfying Condition~\eqref{eq:fullrankingcondition}, thus $C_{\mu}(-\operatorname{CE}) = C^{\star}_\mu$.
\end{theorem}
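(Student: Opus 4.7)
The plan is to verify Condition~\eqref{eq:fullrankingcondition} directly for $f^\star(x) = -\Expect\{T|X=x\}$. Fix $(x,x')$ and suppose $f^\star(x) \leq f^\star(x')$, i.e.\ $\Expect\{T|X=x\} \geq \Expect\{T|X=x'\}$. The goal reduces to showing $\Prob{T>T'|x,x'} \geq \tfrac12$; once this holds for every pair, \cref{def:optimal-risk-ordering} immediately gives $C_{\mu}(-\operatorname{CE}) = C_\mu^\star$.

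The first step upgrades the mean inequality into a pointwise inequality between survival curves. Using the standard identity $\Expect\{T|X=x\} = \int_0^{\infty} S(t|x)\, dt$ valid for nonnegative random variables, the hypothesis rewrites as $\int_0^\infty [S(t|x) - S(t|x')]\,dt \geq 0$. By \cref{ass:A}, the integrand $t \mapsto S(t|x)-S(t|x')$ has constant sign, so a nonnegative integral forces it to be nonnegative for every $t\geq 0$; equivalently, $S(t|x) \geq S(t|x')$ pointwise, and the conditional law of $T$ given $X=x$ stochastically dominates that given $X=x'$.

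The second step converts stochastic dominance into the pairwise bound. Because $(T,X)$ and $(T',X')$ are independent copies, $T$ and $T'$ are conditionally independent given $(X,X')$ with densities $\mu(\cdot|x)$ and $\mu(\cdot|x')$, whence
\[
\Prob{T>T'|x,x'} = \int_0^\infty S(t|x)\,\mu(t|x')\,dt.
\]
Applying the same identity to two independent continuous copies drawn from $\mu(\cdot|x')$ supplies the normalization $\int_0^\infty S(t|x')\mu(t|x')\,dt = \tfrac12$, so subtracting yields
\[
\Prob{T>T'|x,x'} - \tfrac12 = \int_0^\infty [S(t|x) - S(t|x')]\,\mu(t|x')\,dt \geq 0,
\]
by Step~1 together with $\mu(\cdot|x') \geq 0$.

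The only nontrivial ingredient is the passage from an integral sign to a pointwise sign in Step~1, and it rests essentially on the non-crossing hypothesis in \cref{ass:A}; without it, the mean ordering alone would not determine the sign of $\Prob{T>T'|x,x'} - \tfrac12$, which is precisely what underlies the strict inclusion $\A \subsetneq \B$ in the hierarchy. Everything else is routine manipulation of survival functions of nonnegative continuous random variables.
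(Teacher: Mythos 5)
Your proof is correct and has the same two-step skeleton as the paper's: first use $\Expect\{T|X=x\}=\int_0^\infty S(t|x)\,dt$ together with the constant-sign hypothesis of \cref{ass:A} to upgrade the ordering of conditional means to pointwise dominance of survival curves, then convert that dominance into $\Prob{T>T'|x,x'}\geq \tfrac12$. The only real divergence is in the second step: the paper proves the implication ``$S_T\geq S_{T'}$ pointwise $\Rightarrow \Prob{T>T'}\geq\tfrac12$'' by writing $\Prob{T\geq T'}=\Expect_{T'}S_T(T')$ and then invoking a first-order stochastic dominance lemma (applied to the non-decreasing function $-S_{T'}$) to compare $\Expect_{T'}S_{T'}(T')$ with $\Expect_{T}S_{T'}(T)$, whereas you simply observe that the baseline $\int_0^\infty S(t|x')\mu(t|x')\,dt=\Expect\,S_{T'}(T')$ equals exactly $\tfrac12$ for a continuous random variable and subtract. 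Your version is self-contained and slightly more elementary, at the cost of leaning on continuity of $T'$ (which the paper assumes anyway); the paper's version isolates the dominance lemma for reuse. One small point worth keeping explicit, as you do: when the integral $\int_0^\infty[S(t|x)-S(t|x')]\,dt$ is zero, the constant-sign assumption still yields $S(\cdot|x)=S(\cdot|x')$ almost everywhere and hence the pointwise inequality, so the upgrade in Step~1 is airtight.
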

The two most commonly used survival models, namely Cox PH and AFT, satisfy \Cref{ass:A} and therefore \Cref{th:assumption-A} applies, as discussed below.

\noindent\textbf{Proportional Hazards model.} The hazard function is defined as $h(t) = S'(t)(1 - S(t))^{-1}$, where $S'$ denotes the \ note{time derivative of the survival curve}. In the PH model, the conditional hazard $h(t|x)$ factorizes as $h(t|x) = h_0(t)e^{f(x)}$,
where $h_0$ is the (non-negative) baseline hazard and $f$ is a function of the covariates \citep{coxPH}. This yields
$$S(t|x) = S_0(t)^{e^{f(x)}}, \hspace{0.5cm}S_0(t) = e^{-\int_0^th_0(\tau)d\tau},$$
where $S_0$ is the baseline survival curve. 
It directly follows that survival curves do not cross at any point in time, therefore \Cref{ass:A} is satisfied and \Cref{th:assumption-A} applies. In this example $f(x)$ also defines an optimal ranking; derivations are provided in the appendix to support this claim.

\noindent\textbf{Accelerated Failure Times model.} The AFT model assumes the following form for time-to-events:
\begin{equation}\label{eq:aft-model}
    \log T = f(X) + \varepsilon,
\end{equation}
where $\varepsilon$ is an independent random variable. The survival curve is parametrized as $S(t|x) = S_0(te^{-f(x)})$,
where $S_0$ is the survival curve of $e^{\varepsilon}$. Note that the survival curves do not cross each other as they are defined as scaling by~$e^{f(x)}$. In this example also, \Cref{ass:A} is satisfied and \Cref{th:assumption-A} applies. As for the PH model, note that $f(x)$ also defines an optimal ranking; derivations are provided in the appendix to support this claim.

\subsection*{B. Conditional Expectation is an Optimal Ordering}
Family $\B$ is defined as the set of survival models for which the negative conditional expectation is an optimal risk ordering. In other words, models $\mu\in\B$ satisfy the following condition.
\begin{assumptionB}
\label{ass:B}
The negative conditional expectation $-\operatorname{CE}(x) = -\Expect{\{T|X=x\}}$ is an optimal risk ordering satisfying~\eqref{eq:fullrankingcondition}.
\end{assumptionB}
Note that \cref{th:assumption-A} proved the inclusion $\A\subset\B$. Now, the strict inclusion, $\A\subsetneq\B$ is proved by providing an example of survival models such that $\mu\in\B\setminus\A$. Consider the extended AFT model presented in the previous section by adding \textit{symmetric heteroscedastic noise}.
\begin{definition}[AFT-H] In the AFT-H model, the time-to-event has the form
\begin{equation}\label{eq:aft-h-model}
    \log T = f(x) + \sigma(x)\varepsilon,
\end{equation}
where $\sigma:\calX\to \Rspace{}_+$ is a positive-valued function satisfying $f(x)\leq f(x')\implies \sigma(x)\leq \sigma(x')$ and $\varepsilon$ is a centered Gaussian random variable.
\end{definition}
\Cref{prop:afth-in-B} shows that AFT-H satisfies \cref{ass:B}.
\begin{proposition}[AFT-H satisfies $\B$]
\label{prop:afth-in-B}
Assume that $\mu$ is in AFT-H. Then, the negative conditional expectation is an optimal risk ordering, thus
$C_{\mu}(-\operatorname{CE}) = C_\mu^{\star}$.
\end{proposition}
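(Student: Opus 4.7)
The plan is to verify Condition~\eqref{eq:fullrankingcondition} directly, by computing both $\operatorname{CE}(x)$ and $\Prob{T>T'\mid x,x'}$ in closed form using the Gaussian specification of $\varepsilon$, and then using the monotonicity hypothesis linking $\sigma$ to $f$.

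First I would compute the conditional expectation. Since $T=\exp(f(x)+\sigma(x)\varepsilon)$ with $\varepsilon\sim\mathcal{N}(0,1)$, the Gaussian moment generating function gives
\begin{equation*}
    \Expect\{T\mid X=x\} = \exp\!\left(f(x)+\tfrac{1}{2}\sigma(x)^2\right).
\end{equation*}
Hence the ordering $-\operatorname{CE}(x)\leq -\operatorname{CE}(x')$ is equivalent to the scalar inequality
\begin{equation*}
    f(x)+\tfrac{1}{2}\sigma(x)^2 \;\geq\; f(x')+\tfrac{1}{2}\sigma(x')^2. \qquad (\star)
\end{equation*}

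Next I would show that $(\star)$ forces $f(x)\geq f(x')$. This is where the heteroscedasticity monotonicity assumption comes in: if we had $f(x)<f(x')$, the AFT-H hypothesis would yield $\sigma(x)\leq \sigma(x')$, so both summands on the left of $(\star)$ would be strictly smaller than their counterparts on the right, contradicting $(\star)$. This contrapositive argument is the key step and arguably the only non-routine part of the proof; it is also what prevents an analogous statement for arbitrary (non-monotone) $\sigma$.

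Finally, with $f(x)\geq f(x')$ in hand, I would express the pairwise probability using independent copies $\varepsilon,\varepsilon'\sim\mathcal{N}(0,1)$:
\begin{equation*}
    \Prob{T>T'\mid x,x'} = \Prob{\sigma(x)\varepsilon - \sigma(x')\varepsilon' \;>\; f(x')-f(x)}.
\end{equation*}
The random variable $\sigma(x)\varepsilon-\sigma(x')\varepsilon'$ is centered Gaussian with variance $\sigma(x)^2+\sigma(x')^2$, so the above probability equals $\Phi\!\bigl((f(x)-f(x'))/\sqrt{\sigma(x)^2+\sigma(x')^2}\bigr)$ with $\Phi$ the standard normal CDF. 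Since $f(x)\geq f(x')$, the argument of $\Phi$ is nonnegative and the probability is at least $1/2$, establishing \eqref{eq:fullrankingcondition}. By \Cref{def:optimal-risk-ordering}, $-\operatorname{CE}$ is then an optimal risk ordering, so $C_\mu(-\operatorname{CE})=C_\mu^\star$. Note that this proof would extend verbatim to any symmetric noise whose scale enters both the mean and the pairwise distribution monotonically; the Gaussian case is the cleanest instance.
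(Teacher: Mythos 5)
Your proof is correct and follows essentially the same route as the paper's: both reduce the ordering induced by $-\operatorname{CE}$ to the ordering induced by $f$ via the identity $\log\Expect\{T\mid X=x\}=f(x)+\tfrac{1}{2}\sigma(x)^2$ combined with the monotonicity of $\sigma$ in $f$, and both then use the symmetry of $\sigma(x)\varepsilon-\sigma(x')\varepsilon'$ to conclude that $\Prob{T>T'\mid x,x'}\geq 1/2$ whenever $f(x)\geq f(x')$. The only cosmetic difference is that you evaluate the Gaussian probability explicitly as $\Phi\bigl((f(x)-f(x'))/\sqrt{\sigma(x)^2+\sigma(x')^2}\bigr)$, where the paper invokes the general symmetric-noise argument from Corollary 2 of Lebedev.
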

This result is a reformulation of Corollary 2 by \cite{lebedev2019nontransitivity}; we provide the original statement in the appendix. 
Note that under AFT-H the conditional survival curves take the following form 
\begin{equation*}
    S(t|x) = S_{\varepsilon}\Big(\frac{\log t - f(x)}{\sigma(x)}\Big),
\end{equation*}
where $S_{\varepsilon}(X) = \Prob{X>\varepsilon}$. Fixing $f(x)$ and varying $\sigma(x)$ we can clearly see how the survival curves cross so that AFT-H \textit{does not} always satisfy \cref{ass:A}.

\subsection*{C. There exists an Optimal Risk Ordering}
Family $\C$ is defined as the set of survival models admitting an optimal risk ordering satisfying~\eqref{eq:fullrankingcondition}.
\begin{assumptionC}
\label{ass:C}
There exists an optimal ordering $f^\star_\mu$ for survival model $\mu$, satisfying~\eqref{eq:fullrankingcondition}.
\end{assumptionC}
Under \cref{ass:C}, a closed form for the maximum C-index attained at $f^\star$ can be derived by combining \cref{prop:maximizer-closed-form} below to the expression of pairwise conditional probabilities for specific models.
\begin{proposition}\label{prop:maximizer-closed-form} Assume that $\mu$ satisfies \cref{ass:C}. Then, the optimal C-index takes the following form:
\begin{equation*}
    C_{\mu}^\star = C_{\mu}(f^\star_\mu) = \Expect_{X, X'}\phi(\Prob{T>T'|X,X'}), 
\end{equation*}
where $\phi(a) = \max(a, 1-a)$ and $f^\star$ satisfies \eqref{eq:fullrankingcondition}.
\end{proposition}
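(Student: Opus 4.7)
The plan is to rewrite $C_\mu(f)$ as a pairwise integral and then perform a pointwise maximisation over each unordered pair $\{x,x'\}$, echoing the minimum weighted feedback arc set reformulation hinted at in~\eqref{eq:mwfas-expression}. Since $T,T'$ are i.i.d.\ continuous, $\Prob{T>T'}=1/2$, so
\begin{equation*}
C_\mu(f) \;=\; 2\,\Expect_{X,X'}\bigl\{\,p(X,X')\,1(f(X)<f(X'))\,\bigr\},
\end{equation*}
where I write $p(x,x')\defeq\Prob{T>T'\mid X=x,X'=x'}$. Relabelling $(X,X')\leftrightarrow(X',X)$ under the expectation—valid because $X,X'$ are exchangeable—and averaging the two resulting representations I would then obtain the symmetric form
\begin{equation*}
C_\mu(f) \;=\; \Expect_{X,X'}\bigl\{\,p(X,X')\,1(f(X)<f(X')) + p(X',X)\,1(f(X)>f(X'))\,\bigr\}.
\end{equation*}

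Next, using $p(x',x)=1-p(x,x')$ (again from continuity of $T$), I would observe that on every realisation of $(X,X')$ the integrand lies in $\{0,\,p(X,X'),\,1-p(X,X')\}$, and is therefore pointwise bounded by $\phi(p(X,X'))=\max(p(X,X'),\,1-p(X,X'))$. Integrating yields the uniform upper bound $C_\mu(f)\leq\Expect\{\phi(p(X,X'))\}$ for every risk $f$. To verify that equality holds at $f^\star$, I would invoke \cref{ass:C}: when $f^\star(x)<f^\star(x')$ one gets $p(x,x')\geq 1/2$ and the integrand equals $p(x,x')=\phi(p(x,x'))$, whereas the reverse ordering gives $p(x,x')\leq 1/2$ and integrand $1-p(x,x')=\phi(p(x,x'))$. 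Either way the summand for $f^\star$ matches the pointwise bound, so the upper bound is attained almost surely.

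The only mildly delicate point I anticipate is the set on which $f^\star$ is tied. Applying \eqref{eq:fullrankingcondition} in both directions there forces $p(X,X')=1/2$, so the strict indicators in the symmetric form contribute $0$ while the pointwise bound $\phi=1/2$ suggests $1/2$; if this tie set carries positive probability I would break ties measurably to produce a representative of $f^\star$ realising the supremum, which is harmless since $p=1/2$ on those pairs. Under a continuous conditional law the tie set is typically negligible and no modification is required, so modulo this routine adjustment the proof is complete.
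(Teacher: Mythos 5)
Your proof is correct and follows essentially the same route as the paper's: both rewrite $C_\mu(f)$ in the symmetric pairwise form $\Expect_{X,X'}\{p(X,X')\,1(f(X)<f(X')) + (1-p(X,X'))\,1(f(X)>f(X'))\}$ and then observe that under \cref{ass:C} the ordering induced by $f^\star$ selects the larger of $p$ and $1-p$ on each pair, i.e.\ $\phi(p)$. Your explicit pointwise upper bound valid for every $f$, and your remark on the tie set $\{f^\star(X)=f^\star(X')\}$ (where $p=1/2$ but the strict indicators contribute $0$, so a measurable tie-break is needed to attain the bound), are in fact more careful than the paper's one-line conclusion.
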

Analogously to the previous cases, a family of distributions satisfying \cref{ass:C} is introduced, using exponential family models. Then, an example of model $\mu\in\C\setminus\B$ is provided.
\begin{definition}[Exponential family survival model]
\label{def:scalar-expo-family}
For $\theta:\calX\to\Rspace{}$, $\beta:\Rspace{}_+\to\Rspace{}_+$, $\tau:\Rspace{}_+\to\Rspace{}$, $\eta:\Rspace{}\to \Rspace{}$, and $A:\Rspace{}\to\Rspace{}$ such that, for all $x\in\calX$, the conditional density of a curved exponential family model is given by
\begin{equation}\label{eq:exponential-survival}
    \mu(t|x) = \beta(t)\exp\left[\eta\circ\theta(x)\tau(t) - A\circ\theta(x)\right],
\end{equation}
with associated parameter $\theta(x)$. For instance, $\theta(x) = \theta^\top x$ in a generalized linear model.  
\end{definition}
The scalar exponential family from \cref{def:scalar-expo-family} covers many of the survival curves classically used in survival analysis, e.g., the exponential, chi-squared, Laplace and normal distributions. Under the exponential family model, the scalar parameterization $\theta(x)$ satisfies the optimal risk ordering condition, as shown in the following proposition proved in the Appendix. Thus, maximization of the C-index can be achieved by estimating the parameters $\theta(x)$ of the model. 
\begin{proposition}
    \label{prop:exp-fam}
    Under \Cref{ass:C}, with $\theta$ continuous, $\beta$ positive, $\tau$ non-decreasing and $\eta$ continuously differentiable and non-decreasing, $\theta(x)$ is an optimal risk ordering for the C-index, thus
    $C_\mu(\theta) = C_\mu^{\star}$.
\end{proposition}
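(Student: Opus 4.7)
The plan is to exploit the one-parameter exponential family structure of $\mu(t\mid x)$: show that it has monotone likelihood ratio (MLR) in $t$ with parameter $\theta(x)$, convert MLR into stochastic ordering of $T\mid X$, and then into the pairwise comparison required by~\eqref{eq:fullrankingcondition}.

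First, for any $x, x' \in \calX$ I would form the likelihood ratio
\begin{equation*}
\frac{\mu(t\mid x)}{\mu(t\mid x')} = \exp\!\Big[\big(\eta(\theta(x)) - \eta(\theta(x'))\big)\tau(t) - \big(A(\theta(x)) - A(\theta(x'))\big)\Big].
\end{equation*}
The base $\beta(t)$ cancels and the ratio is a monotone function of $\tau(t)$ whose slope has the sign of $\eta(\theta(x)) - \eta(\theta(x'))$. Because $\eta$ is non-decreasing this sign coincides with that of $\theta(x) - \theta(x')$, and because $\tau$ is non-decreasing in $t$ the whole expression is (weakly) monotone in $t$, which is precisely the MLR property.

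Next, I would invoke the standard Karlin--Rubin argument to convert MLR into first-order stochastic dominance: whenever $\theta(x') \geq \theta(x)$, the densities $\mu(\cdot\mid x')$ and $\mu(\cdot\mid x)$ cross at most once and the former dominates in the upper tail, so $\Prob{T > s\mid X = x'} \geq \Prob{T > s\mid X = x}$ for every $s \geq 0$. Then, for independent continuous variables $Y \sim F_Y$ and $Y' \sim F_{Y'}$ with $F_Y \leq F_{Y'}$ pointwise,
\begin{equation*}
\Prob{Y > Y'} \;=\; \int (1 - F_Y(s))\,dF_{Y'}(s) \;\geq\; \int (1 - F_{Y'}(s))\,dF_{Y'}(s) \;=\; \tfrac{1}{2},
\end{equation*}
where the last equality follows from the probability integral transform. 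Applying this inequality to the independent pair $T\mid X=x'$ and $T\mid X=x$ yields the desired pairwise comparison~\eqref{eq:fullrankingcondition} for $\theta$ (up to the risk-versus-survival sign convention), so \Cref{ass:C} is satisfied with $f^\star_\mu = \theta$, and \Cref{prop:maximizer-closed-form} then gives $C_\mu(\theta) = C_\mu^\star$.

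The only delicate step is the MLR-to-stochastic-dominance implication under the weak monotonicity of $\eta$ and $\tau$: the likelihood ratio can be flat on intervals where $\tau$ is constant, but the single-crossing geometry still holds, so the classical Karlin--Rubin proof goes through without modification. All remaining steps reduce to standard manipulations of exponential-family densities and cumulative distribution functions.
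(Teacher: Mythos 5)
Your proof is correct, and it starts exactly where the paper's proof starts: both establish the monotone likelihood ratio property of the curved exponential family from the monotonicity of $\eta$ and $\tau$. You diverge afterwards. The paper goes from MLR to $\Prob{T>T'|x,x'}\geq \tfrac12$ in one step, by writing $\Prob{T>T'|x,x'}$ as a double integral over the region $t_1\geq t_0$ and comparing integrands pointwise ($\mu(t_1|x)\mu(t_0|x')\geq \mu(t_1|x')\mu(t_0|x)$ is exactly the MLR inequality rearranged), then using $\Prob{T>T'}+\Prob{T'>T}=1$. You instead pass through first-order stochastic dominance via the single-crossing (Karlin--Rubin) argument and then the identity $\int(1-F_{Y'})\,dF_{Y'}=\tfrac12$. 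Both are valid; the paper's is more direct and self-contained, while your detour proves something slightly stronger as a by-product --- the conditional survival curves are uniformly ordered, i.e., the model in fact satisfies Assumption A, so you could alternatively conclude by invoking the machinery behind Theorem 3.1 (the stochastic-dominance lemma in the appendix) rather than re-deriving the $\tfrac12$ bound. Your handling of the flat-ratio case and the sign convention is fine; the paper is equally loose about the sign of $\theta$ versus the risk ordering.
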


\noindent\textbf{Weibull with varying shape parameter}
Consider the model defined by Weibull conditional survival curves with varying shape parameter
\begin{equation}\label{eq:weibull-model}
    S(t|x) = e^{-t^{f(x)}}.
\end{equation}
The following \cref{prop:weibull} proves the strict inclusion $\B\subsetneq\C$.
\begin{proposition}\label{prop:weibull}
    The above Weibull model \eqref{eq:weibull-model} satisfies \cref{ass:C} but not \cref{ass:B}.
\end{proposition}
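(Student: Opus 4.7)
The plan is to establish the two claims separately: (a) \cref{ass:C} holds for the Weibull model~\eqref{eq:weibull-model} by exhibiting a global optimal risk ordering, and (b) \cref{ass:B} fails by producing covariates on which $-\operatorname{CE}$ violates~\eqref{eq:fullrankingcondition}.

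For (a), I would first reduce the pairwise probability to a one-dimensional integral. Using the reparametrization $T = V^{1/f(x)}$ with $V \sim \mathrm{Exp}(1)$, applying the exponential conditional survival after conditioning on $T'$, and performing the change of variable $u = (t')^{f(x')}$, one obtains
\begin{equation*}
\Prob{T>T'\mid x,x'} \;=\; J(r), \qquad J(r) \;\defeq\; \int_0^\infty e^{-v - v^r}\, dv,
\end{equation*}
with $r = f(x)/f(x')$. Since $J(1) = 1/2$, it suffices to show that $J$ is strictly increasing on $(0,\infty)$; that gives $J(r)\ge 1/2 \iff r \ge 1 \iff f(x)\ge f(x')$, whence $f^\star = -f$ is an optimal risk ordering in the sense of~\eqref{eq:fullrankingcondition}, proving \cref{ass:C}.

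The monotonicity of $J$ is the central obstacle. A direct sign analysis of $J(r)-1/2 = \int_0^\infty (e^{-v^r}-e^{-v})\, e^{-v}\,dv$ split at $v=1$ is inconclusive, as the positive contribution from $v<1$ competes with the negative contribution from $v>1$. The trick is to differentiate under the integral and then substitute $v = e^x$, which gives
\begin{equation*}
J'(r) \;=\; -\int_{-\infty}^{\infty} x\, e^{-\psi_r(x)}\, dx, \qquad \psi_r(x) \;=\; e^x + e^{rx} - (r+1)x.
\end{equation*}
Folding the negative half-line onto the positive one rewrites $J'(r) = \int_0^\infty x\bigl(e^{-\psi_r(-x)} - e^{-\psi_r(x)}\bigr)\, dx$, and the elementary inequality $\sinh y > y$ for $y > 0$ yields $\psi_r(x) - \psi_r(-x) = 2(\sinh x - x) + 2(\sinh rx - rx) > 0$ on $(0,\infty)$. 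Hence the integrand is strictly positive and $J'(r) > 0$ for all $r > 0$, as required.

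For (b), the Weibull moment identity $\operatorname{CE}(x) = \Gamma(1 + 1/f(x))$ together with the non-monotonicity of $k \mapsto \Gamma(1+1/k)$ on $(0,\infty)$---which equals $1$ both at $k=1$ and in the limit $k \to \infty$, with a unique interior minimum near $k \approx 2.17$---provides a simple counterexample. Choose $f(x_1) = 3/2$ and $f(x_2) = 3$, which lie on opposite sides of the minimum and satisfy $\Gamma(5/3) > \Gamma(4/3)$; then $-\operatorname{CE}(x_1) < -\operatorname{CE}(x_2)$, whereas by (a) we have $\Prob{T>T'\mid x_1,x_2} = J(1/2) < 1/2$. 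This contradicts~\eqref{eq:fullrankingcondition} applied to $-\operatorname{CE}$, so \cref{ass:B} fails.
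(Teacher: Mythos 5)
Your proof is correct, and while it reaches the same reduction as the paper---both express the pairwise probability as the one-dimensional integral $J(r)=\int_0^\infty e^{-v-v^r}\,dv$ with $r$ the ratio of shape parameters, and both use the non-monotonicity of $k\mapsto\Gamma(1+1/k)$ for the failure of \cref{ass:B}---the central step is handled quite differently. The paper follows \citet{lebedev2019nontransitivity}: it only shows $J(\beta)\ge 1/2$ for $\beta\ge 1$, splitting into $\beta\ge 2$ (a numerical bound $\approx 0.507$) and $1<\beta<2$ (a tangent-line estimate whose positivity is then checked by ``plotting this function''), so the argument is partly numerical and graphical. You instead prove the stronger statement that $J$ is strictly increasing on all of $(0,\infty)$, via differentiation under the integral, the substitution $v=e^x$, folding the line onto $(0,\infty)$, and the elementary inequality $\sinh y>y$; this is fully analytic, avoids the case split and the appeal to a plot, and is self-contained. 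It also makes part (b) airtight: the strict inequality $J(1/2)<1/2$ certifies that $-\operatorname{CE}$ genuinely violates \eqref{eq:fullrankingcondition} at the pair $f(x_1)=3/2$, $f(x_2)=3$, whereas the paper only remarks that the conditional expectation ``gives a different ranking.'' Two minor points: under the sign convention of \eqref{eq:fullrankingcondition} the optimal ordering is indeed $f^\star=-f$ (the paper loosely calls $f$ itself the optimal ordering), which you handle correctly; and your differentiation under the integral deserves a one-line domination check (e.g., $|v^r\ln v|\,e^{-v-v^r}$ is dominated locally uniformly in $r$ by an integrable function), which is routine.
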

The proof is based on the result by \citet{lebedev2019nontransitivity} showing that~$f$ gives an optimal risk ordering. \Cref{ass:B} is not satisfied as the expectation of a Weibull is not monotone on the shape parameter. Indeed, the expectation is given by $\Gamma(1 + \frac{1}{f(x)})$, where $\Gamma$ denotes the Gamma function. This function has a minimum between $1.46$ and $1.47$, decreasing first and increasing for larger values. Thus, it gives a different ranking than the optimal risk ordering $f$.

\subsection*{D. There is no Optimal Risk Ordering}\label{subsec:notorder}

Family $\D$ contains survival models for which there does not exist an optimal risk ordering satisfying condition \eqref{eq:fullrankingcondition}. The following example illustrates this phenomenon. Let $\{T_i\}_{1\leq i \leq n}$ be random variables corresponding to time-to-event of individuals $1\leq i \leq n$, and consider the following assumption.
\begin{assumptionD}
\label{ass:D}
There exists $m\geq 3$, a subset of indices $\mathcal{I}\subset\{1,\ldots,n\}$, $|\mathcal{I}|=m$, and an ordering $i_1< i_2< \ldots< i_m$ such that, denoting $i_{m+1} = i_1$,
    \begin{equation*}
    \min_{k\in\{1,\ldots,m\}}\left(\Prob{T_{i_k}<T_{i_{k+1}}}\right) > \frac{1}{2}.
\end{equation*}
\end{assumptionD}
\Cref{ass:D} implies the existence of a cyclic sequence with respect to the statistical preference order~\eqref{eq:statistical-preference}, which implies there is no ranking function satisfying the optimal risk ordering~\eqref{eq:fullrankingcondition}. 
\begin{figure}[t!]
    \centering
    \includegraphics[width=0.5\textwidth]{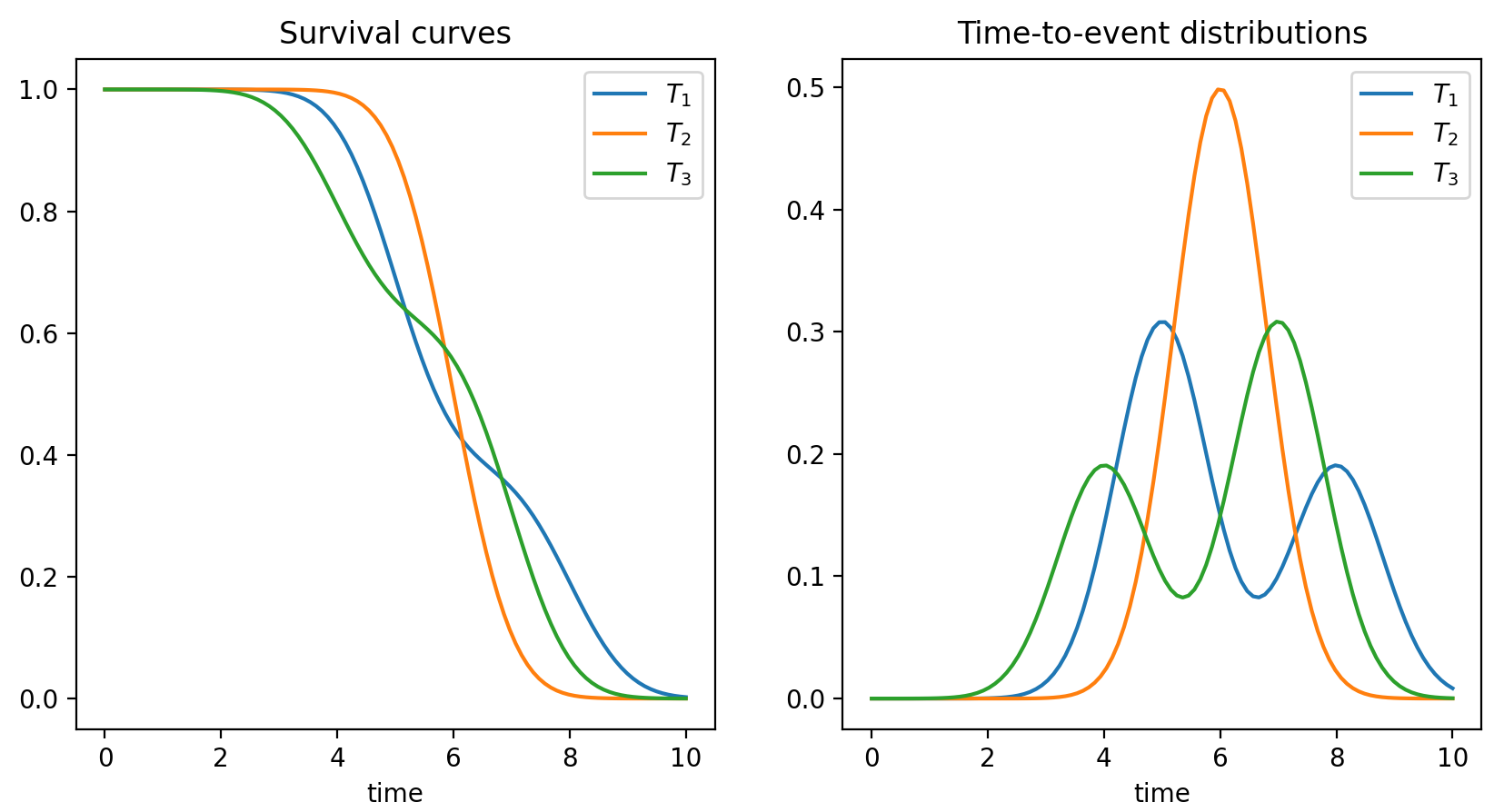}
    \caption{Survival curves and time-to-event distributions of a cycle of length three. In this case $P(T_1 < T_2) = P(T_2 < T_3) \approx 0.52$ and $(T_3 < T_1) \approx 0.55$. Hence, it is a cycle as $\min(P(T_1 < T_2), P(T_2 < T_3), P(T_3 < T_1)) > \frac{1}{2}$. }
    \label{fig:no_transitivity}
\end{figure}
In \cref{fig:no_transitivity}, we illustrate this phenomenon with a cyclic sequence made of a uni-modal and two multi-modal time-to-event distributions. 

In the previous sections, we showed that for $\mu \in \A,\B,\C$, the maximizer of the C-index in fact only depends on the \textit{conditional density} $\mu(t|x)$; however, if $\mu\in\D$, the maximizer may also depend on the marginal covariate distribution $\mu(x)$. This is an important characteristic of cyclic sequences, since the optimizer of the C-index may change under distributional shifts of the marginal population of patients. This phenomenon is shown in the following result, proved in the Appendix. 
\begin{proposition}\label{prop:marginal-dependence}
    Under \Cref{ass:D}, the maximizer of the C-index depends on the marginal distribution of the patients covariates $\mu(x)$. 
\end{proposition}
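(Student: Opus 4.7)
The plan is to prove the claim by exhibiting two concrete joint distributions $\mu_1,\mu_2\in\PP$ that share the same conditional density $\mu(t|x)$ but differ in the marginal $\mu(x)$, and whose C-index maximizers (as equivalence classes under monotone transformation of $f$) induce distinct orderings on the cyclic subset of covariates guaranteed by \Cref{ass:D}. Starting from the rewriting in~\eqref{eq:mwfas-expression},
\begin{equation*}
C_{\mu}(f) \propto \Expect_{X,X'}\bigl\{\Prob{T>T'|X,X'}\mathbf{1}(f(X)<f(X'))\bigr\},
\end{equation*}
maximizing $C_\mu$ is a weighted pairwise-ordering problem in which the weight on each ordered pair $(x,x')$ factorizes as $\mu(x)\mu(x')\cdot \Prob{T>T'|x,x'}$. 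Hence the marginal enters multiplicatively; the intuition is that reshaping $\mu(x)$ reweights the pairwise votes, and in the cyclic (non-transitive) regime of \Cref{ass:D} this can change the optimal aggregate ranking.

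First I would extract a length-$3$ sub-cycle $(x_1,x_2,x_3)$ from the cycle of \Cref{ass:D}; the general $m$-case is analogous since any cyclic tournament contains a $3$-cycle of strict preferences. I would then construct two marginals concentrated on $\{x_1,x_2,x_3\}$, namely $\mu_1(x_1)=1-2\varepsilon$, $\mu_1(x_2)=\mu_1(x_3)=\varepsilon$, and symmetrically $\mu_2(x_2)=1-2\varepsilon$, $\mu_2(x_1)=\mu_2(x_3)=\varepsilon$, for a small $\varepsilon>0$ to be chosen later. For $\mu_1$, the two pairwise terms involving $x_1$ contribute $\Theta(\varepsilon)$ to $C_{\mu_1}$, while the remaining pair $(x_2,x_3)$ contributes only $O(\varepsilon^2)$. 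Because the two $\Theta(\varepsilon)$-weight pairs share the common element $x_1$, the pairwise statistical preferences they impose cannot conflict, and for small enough $\varepsilon$ the maximizer $f^\star_{\mu_1}$ must place $x_1$ at the position dictated by its pairwise preferences against both $x_2$ and $x_3$. The same argument applied to $\mu_2$ pins down $x_2$ to its pairwise-optimal position against $x_1$ and $x_3$ under $f^\star_{\mu_2}$. A short case check using the cyclic pattern $f(x_1)\succ f(x_2)\succ f(x_3)\succ f(x_1)$ (in the statistical-preference sense induced by \Cref{ass:D}) then shows that the two resulting orderings flip the relative order of $x_2$ and $x_3$, so $f^\star_{\mu_1}$ and $f^\star_{\mu_2}$ are genuinely different and the maximizer depends on $\mu(x)$.

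The main technical obstacle is making the ``dominant-term'' heuristic rigorous. For this I would let $\delta := \min_k |\Prob{T_{i_k}<T_{i_{k+1}}}-\tfrac{1}{2}|>0$, which is strictly positive by \Cref{ass:D}. Swapping $f$-values at an $x_1$-incident pair that violates the pairwise statistical preference changes $C_{\mu_1}$ by at least $2\delta\varepsilon(1-2\varepsilon)$, whereas it can worsen the $(x_2,x_3)$ contribution by at most $\varepsilon^2$; choosing $\varepsilon$ small enough that $2\delta(1-2\varepsilon)>\varepsilon$ guarantees every $x_1$-incident violation is strictly suboptimal, which pins down the optimal ranking uniquely. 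Once this quantitative comparison is in place, the remainder of the argument is a finite enumeration over the three linear orderings of $\{x_1,x_2,x_3\}$ compatible with the forced pairwise constraints, which makes the disagreement between $f^\star_{\mu_1}$ and $f^\star_{\mu_2}$ transparent.
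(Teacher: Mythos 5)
Your overall strategy is the same as the paper's in its starting point --- both reduce C-index maximization over a finitely supported marginal to a weighted pairwise-ordering (MWFAS) problem with weights $\mu(x)\mu(x')\Prob{T>T'|x,x'}$ --- but you then go considerably further: the paper stops at the observation that the weights contain the factor $p_ip_j$ and asserts that the solution ``clearly'' depends on it, whereas you actually construct two marginals with identical conditionals whose maximizers induce different orderings. That constructive step is the right thing to do, and your $m=3$ argument is sound: for a strict $3$-cycle, concentrating mass on $x_1$ forces both $x_1$-incident comparisons (which are jointly satisfiable since they share $x_1$), the $(x_2,x_3)$ pair is an $O(\varepsilon^2)$ perturbation, and the two resulting forced orderings disagree on $x_2$ versus $x_3$. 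The quantitative comparison via $\delta=\min_k|\Prob{T_{i_k}<T_{i_{k+1}}}-\tfrac12|>0$ is also correct in substance.

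There is, however, a genuine gap in the reduction from general $m$ to $m=3$. \Cref{ass:D} only constrains the $m$ \emph{consecutive} pairs of the cycle; it says nothing about the chords $(i_k,i_\ell)$ with $|k-\ell|\geq 2$, which may be exact ties ($\Prob{T_{i_k}>T_{i_\ell}}=\tfrac12$) or oriented arbitrarily. The preference digraph is therefore not a tournament, and the fact ``every cyclic tournament contains a $3$-cycle'' does not apply: for $m=4$ with both chords tied, no $3$-element subset carries three strict preferences forming a cycle, so your extraction of a strict $3$-sub-cycle (and hence the positivity of $\delta$ on all three edges of it) can fail. The fix is to avoid the reduction entirely and run the perturbation argument edge by edge on the full $m$-cycle: for each $k$, take the marginal $\mu_k$ that puts mass $\tfrac{1-(m-2)\varepsilon}{2}$ on each of $x_{i_k},x_{i_{k+1}}$ and $\varepsilon$ on the remaining cycle points; for $\varepsilon$ small the maximizer under $\mu_k$ must order $x_{i_k},x_{i_{k+1}}$ consistently with the (strict, margin $\geq\delta$) edge $e_k$. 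If the maximizer did not depend on the marginal, a single linear order would have to respect every $e_k$ simultaneously, which is impossible for a directed cycle. This uses only the edges that \Cref{ass:D} actually controls and works for all $m\geq 3$.
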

In particular, this means that the optimal relative order between patients may change if new patients are added to the original patient cohort. This phenomena does not happen in the binary setting where the ranking measure is the AUC as its optimizer is the conditional expectation, as discussed in \cref{sec:fisherconsistency}. 

\section{CONSISTENCY AND EXCESS RISK BOUNDS}
\label{sec:consistency-excess}

\begin{figure*}[ht]
    \centering
    \includegraphics[width=.66\columnwidth]{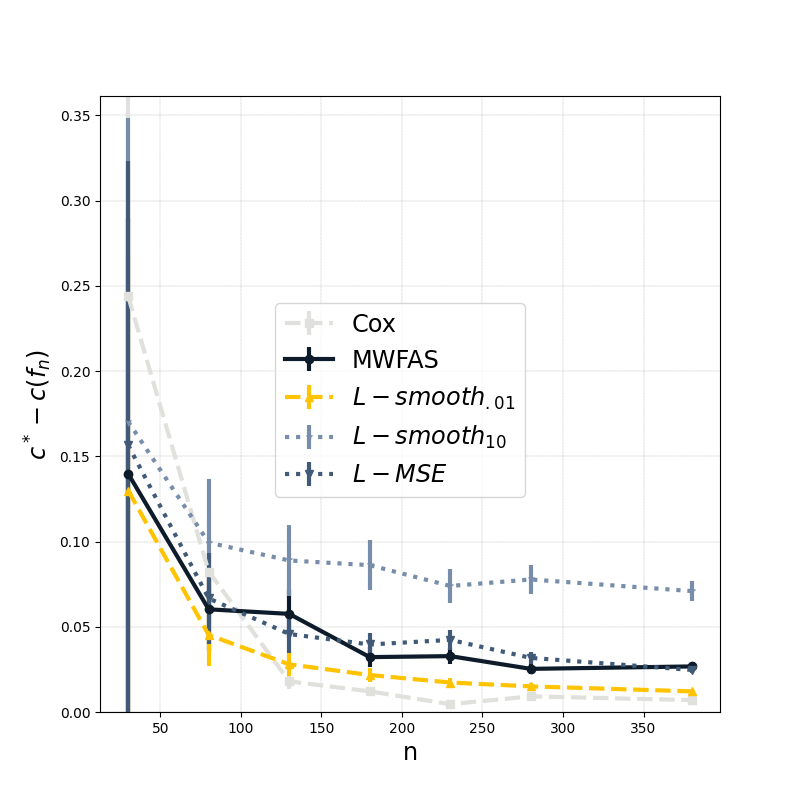}
    \includegraphics[width=.66\columnwidth]{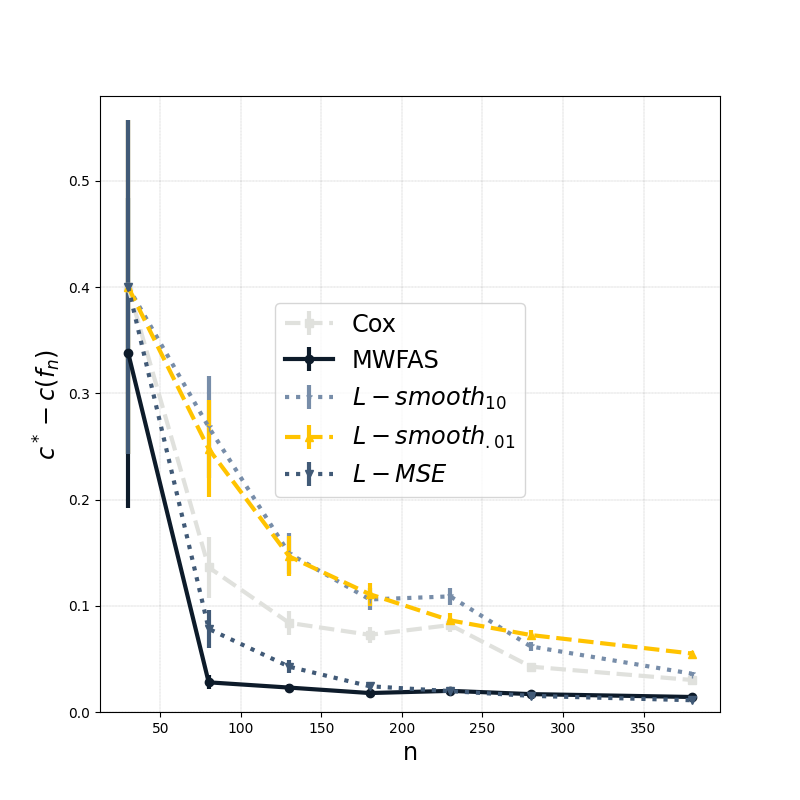}
    \includegraphics[width=.66\columnwidth]{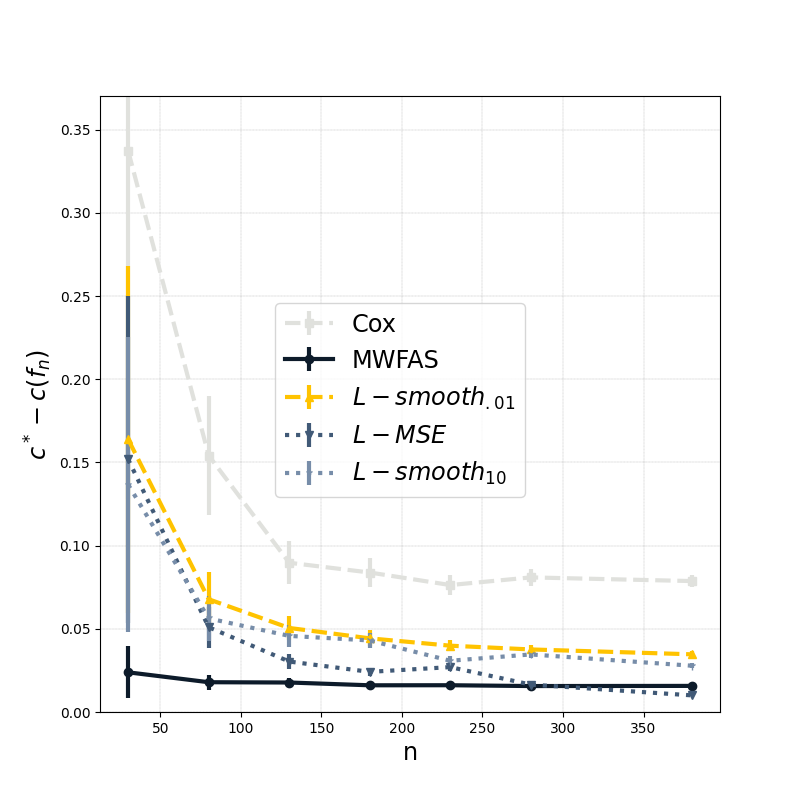}
    \caption{C-index empirical excess risks of the different methods of ranking for various training sizes $n$. The left, center and right figures correspond to data generated from models satisfying assumptions A, B and C, respectively. }
    \label{fig:results}
\end{figure*}

We now study several estimation procedures classically used in survival analysis, all based on Fisher-consistent, smooth cost functions. In particular, we discuss under which families of survival models introduced in the previous section Fisher-consistency holds. Although existing work was limited to family $\A$ (e.g.~\cite{clemenccon2013ranking, clemenccon2018ranking}), we prove that all the considered methods are consistent in family $\B \supsetneq A$. We start by introducing the considered method and their extensions to the censoring case. Then, we provide excess risk bounds on the associated C-index suboptimality. 

\subsection{Estimation Procedures}
\label{sec:survival_training_losses}

\noindent\textbf{Estimating the conditional expectation (A, B).} Without a specific survival model but under \cref{ass:B}, one can use any cost function minimized by a monotone transformation of the conditional expectation. The following \cref{th:cost-functions-B} provides a family of risks based on Fenchel-Young losses \citep{blondel2019learning} satisfying this property. 

\begin{theorem}\label{th:cost-functions-B}
    Let $\Omega:\calC\rightarrow\Rspace{}$ be a twice-differentiable \textit{strongly convex} function \footnote{A one-dimensional strongly convex function is one for which the Hessian is uniformly lower bounded $\nabla_u^2\Omega \geq C> 0$ for all $u$ in the domain.} defined in a closed domain $\calC\supseteq \Rspace{}_{+}$ such that $\lim_{u\to\infty}\nabla\Omega(u) = +\infty$. Define the cost function $S(v, t) = \Omega^{*}(v) - vt$ where $\Omega^{*}$ is the Fenchel conjugate of $\Omega$ \citep{rockafellar1997convex} \footnote{The Fenchel conjugate $\Omega^{*}$ of $\Omega$ is defined for all $v\in\Rspace{}$ as $\Omega^*(v) = \sup_{u\in\calC}~vu - \Omega(u)$}. Then, the following risk
\begin{equation}\label{eq:cost-function-conditional}
    \calR(f) = \Expect_{(X, U, \Delta)}~\frac{\Delta S(f(X), U)}{G(U)},
\end{equation}
is convex, smooth, and its minimizer is a monotone transformation of the conditional expectation. Thus, under \cref{ass:B} it is Fisher consistent to the C-index. 
\end{theorem}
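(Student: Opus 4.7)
The plan is to verify in order the three claims of the theorem: convexity and smoothness of $\calR$, identification of its minimizer as a monotone transformation of the conditional expectation, and Fisher consistency under Assumption~B. For the first item, I will invoke standard Fenchel duality: since $\Omega$ is strongly convex with $\nabla\Omega(u)\to\infty$, the conjugate $\Omega^\ast$ is finite, convex and continuously differentiable on all of $\Rspace{}$ with $\nabla\Omega^\ast = (\nabla\Omega)^{-1}$ (see~\citet{rockafellar1997convex}). Hence $v\mapsto \Omega^\ast(v) - vt$ is convex and smooth for each fixed $t$, and $\calR$ inherits these properties as an integral of such terms against the nonnegative weight $\Delta/G(U)$.

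Next I would remove the censoring using the IPCW identity already invoked in \cref{sec:cindex}, yielding
\begin{equation*}
    \calR(f) \;=\; \Expect\bigl[S(f(X),T)\bigr] \;=\; \Expect_X\bigl[\Omega^\ast(f(X)) - f(X)\,\Expect[T\mid X]\bigr].
\end{equation*}
Because the admissible class of $f$ is unrestricted, the global minimizer can be obtained by minimizing the integrand pointwise at each $x$. The first-order condition $\nabla\Omega^\ast(v) = \Expect[T\mid X=x]$ inverts to
\begin{equation*}
    f^\ast(x) \;=\; \nabla\Omega\bigl(\Expect[T\mid X=x]\bigr),
\end{equation*}
and strong convexity of $\Omega$ makes $\nabla\Omega$ strictly increasing, so $f^\ast$ is a strictly monotone transformation of the conditional expectation.

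Finally, since $\Cind$ depends on $f$ only through the order it induces on $\calX$, $f^\ast$ produces the same ranking (up to reversal) as $\Expect[T\mid X]$, and hence as the optimal risk ordering $-\Expect[T\mid X]$ postulated by Assumption~B. By that assumption, this ranking attains the maximum C-index $C^\star_\mu$, which establishes Fisher consistency in the sense of \cref{def:consistency}. The main obstacle, beyond routine conjugate-function bookkeeping, is the pointwise-minimization step: it implicitly requires the hypothesis class of $f$ to be rich enough that each $x$ may be optimized independently. A secondary technical point is that the IPCW reduction actually requires $C\perp (X,T)$, slightly stronger than the $C\perp X$ stated in the setup; this will need to be acknowledged or the assumption strengthened.
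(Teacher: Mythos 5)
Your proposal is correct and follows essentially the same route as the paper's proof: reduce the censored risk to the uncensored one via the IPCW identity, identify the pointwise minimizer as $\nabla\Omega(\Expect[T\mid X=x])$, and conclude monotonicity from convexity of $\Omega$ and consistency from \cref{ass:B}. The only difference is presentational—you derive the minimizer from the first-order condition $\nabla\Omega^{*}(v)=\Expect[T\mid X=x]$ where the paper cites the Fenchel--Young loss machinery of \citet{blondel2019learning}—and your two closing caveats (richness of the hypothesis class for pointwise minimization, and the need for $C\perp(X,T)$ rather than merely $C\perp X$ in the IPCW step) are legitimate points that the paper's own proof also leaves implicit.
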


When $\calC = \Rspace{}$ and $\Omega(u) = u^2/2$ this corresponds precisely to ICPW \citep{robins2000IPCW} on least squares. However, \Cref{th:cost-functions-B} provides a larger family of consistent smooth risks by choosing $\Omega$ and its domain $\calC$ using the construction of Fenchel-Young losses. 

Note that the estimator minimizing \eqref{eq:cost-function-conditional} is not efficient in the presence of censoring as only samples corresponding to events $\Delta=1$ contribute to the cost function. To alleviate this issue, \citet{steingrimsson2020deep} uses semi-parametric efficiency theory for missing data \citep{tsiatis2006semiparametric, robins1994estimation}, and develops an augmented estimator with smallest asymptotic variance among all unbiased estimators of $\Expect~S(f(X), T)$. 

\noindent\textbf{Maximum Likelihood Estimators (A, B, C).} Assume the conditional survival model $\mu_f(t|x)$ lies in a family of distributions so that one of the parameters $f(x)$ gives the optimal risk ranking \eqref{eq:fullrankingcondition}, thus belonging to class $\C$. We can learn the optimal parameter under censoring by minimizing the following MLE loss \citep{Kalbfleisch2002}:
\begin{equation*}\label{eq:mle-cost-function}
   \calR(f) = -\Expect_{(X, U, \Delta)}~\Delta\log \mu_f(U|X) + (1-\Delta)\log S_f(U|X).
\end{equation*}
The derivation of this loss can be found in the Appendix. 
This loss can be used for all models presented in the previous section, namely PH, AFT, AFT-H, Weibull and the exponential family. Whenever the model is identifiable, the MLE is consistent to the true parameter and thus Fisher consistent to the C-index. An important advantage of this loss is that the censored samples have an explicit role and provide signal during the learning procedure. Note that, penalized versions of~\eqref{eq:mle-cost-function} can also be used to obtain explicit finite sample risk bounds (see \Cref{prop:finite-sample-Lasso-excess}).

\noindent\textbf{Smooth C-index (A, B, C, D).} In family $\D$, one cannot assume the existence of an optimal risk ordering. In this case, an alternative method is smooth C-index maximization, which is based on smoothing the indicator function defining the C-index under censoring as presented in \cref{sec:cindex}, leading to a non-convex smooth loss \citep{mayr2016boosting}. The non-convexity of this approach may cause convergence problems and a scaling parameter may be tuned to guarantee proper convergence. 

\noindent\textbf{Estimating pairwise probabilities (A, B, C, D).} The smooth C-index has two-main problems: its non concavity and the fact its optimizer is not robust to marginal distribution shifts due to \cref{prop:marginal-dependence}. We propose a novel methodology whereby instead of learning a risk function $f$ specifying the ranking on the training cohort, we (1) learn the pairwise conditional probabilities $\Prob{T>T'|x, x'}$ on the training data with an estimator $h(x, x')$ and then (2) we construct the ranking that better satisfies the relative order constraints in expectation over the finite validation cohort $x_1,\ldots, x_n$ by solving:
\begin{equation}\label{eq:inference-mwfas}
    \min_{\sigma\in\calS}\sum_{i, j=1}^n\gamma_{ji}1(\sigma(i) < \sigma(j)),
\end{equation}
where $\calS$ is the set of permutations of size $n$ and $\gamma_{ij} = |2h(x_i, x_j) - 1|1(h(x_i, x_j) > 1/2)$. The derivation of this methodology follows easily from the C-index expression \eqref{eq:mwfas-expression} and it is consistent by construction. The combinatorial problem \eqref{eq:inference-mwfas} is known as the Minimum Weight Feedback Arc Set (MWFAS) problem \citep{duchi2010consistency, karp1972reducibility}; it is known to be NP-Hard. However, multiple approximations of this problem exist \citep{even1998approximating, demetrescu2003combinatorial}.

This method addresses the two problems of the smooth C-index. First, the pairwise conditional probabilities can be learned using convex estimation methods, such as logistic regression on the binary problem $\tilde{Y} = \operatorname{sign}(T - T')$ and $\tilde{X} = (X, X')$. Second, the ranking estimator is robust to marginal distributional shifts as the inference algorithm \eqref{eq:inference-mwfas} is computed on the validation cohort. It is interesting to note that the computational bottleneck of the smooth C-index coming from its non concavity has now been transposed into the computational bottleneck of the combinatorial inference problem. 

\subsection{Excess Risk Bounds}

The question is now to obtain excess risk bounds on the C-index when there exists an optimal risk ranking $f^\star$ satisfying \eqref{eq:fullrankingcondition}. The following \cref{th:excess-risk-bound} bounds the excess risk of the C-index.
\begin{theorem}[Excess risk bounds]\label{th:excess-risk-bound}
    Let $f^\star$ be an optimal risk ranking satisfying \eqref{eq:fullrankingcondition}. Let $L> 0$ a positive constant satisfying 
    \begin{equation}\label{eq:bound-L}
        |2\Prob{T>T'|x, x'} - 1| \leq L|f^\star(x) - f^\star(x')|, 
    \end{equation}
    for all pairs $x, x'$. Then, the excess risk of the C-index can be bounded as 
    \begin{equation*}
        C(f^\star) - C(f) \leq 2L\Expect_{X}|f(X) - f^\star(X)|. 
    \end{equation*}
\end{theorem}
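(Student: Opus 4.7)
The plan is to rewrite the excess C-index risk as an expectation of $|2\eta-1|$ restricted to pairs where $f$ and $f^\star$ rank oppositely, and then bound that quantity by combining hypothesis~\eqref{eq:bound-L} with a small algebraic fact that holds precisely on the disagreement event.

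First I would convert $C$ into a form well suited to comparing two ranking functions. Starting from~\eqref{eq:mwfas-expression}, using $\Prob{T > T'} = 1/2$ by continuity of $T$ (no ties almost surely), and applying a symmetrization $(X,X') \leftrightarrow (X',X)$ together with $\eta(x',x) = 1 - \eta(x,x')$, where $\eta(x,x') \defeq \Prob{T>T' \mid X=x, X'=x'}$, one obtains
\begin{equation*}
C(f) = \frac{1}{2} + \Expect_{X,X'}\left[(2\eta(X,X') - 1)\,1(f(X) < f(X'))\right].
\end{equation*}
Subtracting the analogous identity for $f^\star$ and using~\eqref{eq:fullrankingcondition} (which forces the sign of $f^\star(X)-f^\star(X')$ to match that of $2\eta(X,X')-1$), a two-case split on this sign collapses the difference to
\begin{equation*}
C(f^\star) - C(f) = \Expect_{X,X'}\left[|2\eta(X,X')-1|\cdot 1(\text{disagree})\right],
\end{equation*}
where disagreement means that $f$ and $f^\star$ rank $(X,X')$ oppositely.

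The main technical step, though elementary, is the algebraic observation that on the disagreement event,
\begin{equation*}
|f^\star(X) - f^\star(X')| \leq |f(X) - f^\star(X)| + |f(X') - f^\star(X')|.
\end{equation*}
I would prove this by WLOG assuming $f^\star(X) < f^\star(X')$ and $f(X) > f(X')$, setting $a = f(X) - f^\star(X)$ and $b = f(X') - f^\star(X')$, and rearranging $f(X) - f(X') = (a-b) - (f^\star(X') - f^\star(X)) > 0$ to deduce $f^\star(X') - f^\star(X) < a - b \leq |a| + |b|$. The subtlety is that a naive triangle inequality would introduce an extra term $|f(X) - f(X')|$ that ruins the bound; it is precisely the disagreement event that eliminates this term.

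Finally, combining the previous steps with hypothesis~\eqref{eq:bound-L}, dropping the (nonnegative) disagreement indicator to pass to an upper bound, and using linearity of expectation together with the i.i.d. property of $(X,X')$ gives
\begin{equation*}
C(f^\star) - C(f) \leq L\,\Expect_{X,X'}\left[|f(X) - f^\star(X)| + |f(X') - f^\star(X')|\right] = 2L\,\Expect_X|f(X) - f^\star(X)|,
\end{equation*}
which is the claimed bound. The factor of $2$ arises naturally from the symmetry between the two members of each pair.
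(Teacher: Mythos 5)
Your proof is correct and follows essentially the same route as the paper's: both reduce the excess C-index to $\Expect\bigl[|2\Prob{T>T'|X,X'}-1|\cdot 1(\text{disagree})\bigr]$ and then combine hypothesis~\eqref{eq:bound-L} with the same algebraic fact (your disagreement inequality is exactly the paper's ``$ab\leq 0 \implies |a|\leq |a-b|$'' applied to $a=f^\star(x)-f^\star(x')$, $b=f(x)-f(x')$). The only cosmetic differences are that you derive the disagreement identity directly by symmetrization instead of invoking the binary-classification excess-risk lemma of Devroye et al., and your normalization of the C-index yields the constant $2L$ of the main-text statement, whereas the appendix works with the unnormalized pairwise probability and therefore states the bound with $L$.
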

The proof of this result can be found in the Appendix, and is based on a reduction of the problem to a binary classification problem with input $(X, X')$ and output $Z=\operatorname{sign}(T - T')$, similar to \cite{agarwal2014surrogate} for the bipartite ranking setting. 
The following \cref{prop:constant-L} shows that condition \eqref{eq:bound-L} is satisfied for most of the models presented in \Cref{sec:maximizers-cindex}. The proof is in the Appendix.

\begin{proposition}\label{prop:constant-L}
    Condition \eqref{eq:bound-L} is satisfied by:
    \begin{itemize}
        \item[(i)] The PH model with $L=1$.
        \item[(ii)] The AFT model \eqref{eq:aft-model} with $L$ the Lipschitz constant of the cumulative distribution function $F_{\varepsilon-\varepsilon'}$ of the symmetric random variable $\varepsilon-\varepsilon'$.
        \item[(iii)] The AFT-H model \eqref{eq:aft-h-model} with $L$ the same as AFT scaled by a factor $a$ where $\sigma(x) \geq 1/a$ for all $x$. 
        \item[(iv)] The exponential family \eqref{eq:exponential-survival}.
    \end{itemize}
\end{proposition}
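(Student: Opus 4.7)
The plan is to handle each of the four cases separately by using (semi-)closed-form expressions for $p(x,x') \defeq \Prob{T > T'\mid X=x, X'=x'}$ and deriving an analytic Lipschitz-type estimate relating $2p-1$ to $f^\star(x)-f^\star(x')$.

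For (i), the PH model yields the well-known pairwise logistic form $p(x,x') = e^{f(x)}/(e^{f(x)}+e^{f(x')})$ (already mentioned in the excerpt as a property of the Cox model, and re-derived in the appendix). Writing $u = f(x)-f(x')$, one obtains $2p-1 = \tanh(u/2)$, and the elementary inequality $|\tanh(u/2)| \leq |u|/2 \leq |u|$ gives $L=1$. For (ii), the AFT closed form $p(x,x') = F_{\varepsilon-\varepsilon'}(f(x)-f(x'))$ from Section 3.A, together with the fact that $\varepsilon-\varepsilon'$ is symmetric (so $F_{\varepsilon-\varepsilon'}(0) = 1/2$), implies $|2p-1| = 2\,|F_{\varepsilon-\varepsilon'}(f(x)-f(x')) - F_{\varepsilon-\varepsilon'}(0)|$. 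Lipschitzness of $F_{\varepsilon-\varepsilon'}$ then directly yields the claimed bound (with the factor of $2$ absorbed into the definition of $L$).

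For (iii) the argument is a rescaling of (ii). Under AFT-H, $\sigma(x)\varepsilon - \sigma(x')\varepsilon'$ is a centered symmetric Gaussian of variance $\sigma(x)^2+\sigma(x')^2$, so
\begin{equation*}
p(x,x') = F_{\varepsilon-\varepsilon'}\!\left(\frac{f(x)-f(x')}{\sqrt{\sigma(x)^2+\sigma(x')^2}}\right),
\end{equation*}
with the same optimal ordering $f^\star = f$ (since the right-hand side exceeds $1/2$ iff $f(x)\geq f(x')$). Applying the AFT bound to the rescaled argument and invoking $\sigma(x)\geq 1/a$ to bound $1/\sqrt{\sigma(x)^2+\sigma(x')^2} \leq a/\sqrt{2}$ yields the claimed constant, which is the AFT constant multiplied by a factor proportional to $a$.

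Case (iv) is where I expect the main difficulty. In the exponential family there is generally no simple closed form for $p(x,x')$, so the plan is to establish $C^1$-smoothness in $\theta(x)$ and bound $|\partial p/\partial \theta(x)|$ uniformly. Writing $p(x,x') = \iint 1(t>t')\,\mu(t\mid x)\mu(t'\mid x')\,dt\,dt'$ and using the exponential family score identity $\partial_{\theta(x)} \log \mu(t\mid x) = \eta'(\theta(x))\bigl(\tau(t) - \Expect[\tau(T)\mid X=x]\bigr)$, we can bring the derivative inside the integral and obtain an expression of the form $\eta'(\theta(x))\,\operatorname{Cov}\bigl(\tau(T),\,1(T>T')\mid x,x'\bigr)$. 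Since the covariance is bounded in absolute value by the variance of $\tau(T)\mid x$ (i.e.\ $A''\circ\theta(x)/\eta'(\theta(x))^2$ up to chain-rule factors), regularity of $\eta$, $\tau$, $A$ together with continuity of $\theta$ yields a finite constant $L$ valid over the (compact) range of $\theta$. The technical point here will be verifying that the dominated convergence argument legitimizing differentiation under the integral, and the pointwise bound on the score, hold under the stated regularity assumptions.
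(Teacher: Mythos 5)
Your proposal is correct and follows essentially the same route as the paper: closed-form pairwise probabilities for PH, AFT and AFT-H (with the same $\tanh$/mean-value bound for Cox, the Lipschitz bound on $F_{\varepsilon-\varepsilon'}$ for AFT, and the $\sqrt{\sigma(x)^2+\sigma(x')^2}$ rescaling for AFT-H), and for the exponential family a differentiation-under-the-integral argument combined with compactness of the parameter range to obtain a uniform Lipschitz constant. The only cosmetic difference is that you express the derivative in case (iv) via the score/covariance identity whereas the paper bounds $G_{\theta_0}'$ directly by dominated convergence; both hinge on the same regularity and compactness assumptions.
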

We now provide two examples of application of \cref{prop:constant-L}.

\noindent\textbf{Example of Lasso estimator on Cox PH model }
\Cref{prop:constant-L} shows that \Cref{th:excess-risk-bound} applies in particular to the PH model. For this specific case, 
we illustrate our theoretical findings with an application to the Lasso estimator for the Cox PH model. This estimator is analyzed in~\citet{huang2013oracle}, where finite sample bounds on the $\ell_1$-penalized negative log-likelihood estimation and prediction errors are proven. More specifically, assuming a generalized linear model $f^\star(x) = {\theta^\star}^\top x$, the Lasso estimator $\hat{\theta}_{\operatorname{Lasso}}$ is shown to satisfy $\|\hat{\theta}_{\operatorname{Lasso}} - \theta^\star\|_1\lesssim \frac{\|\theta^\star\|_0\log(d)}{n}$ in high probability for sufficiently large training datasets. Operator $\lesssim$ denotes inequality up to log and constant terms depending on the model, and $\|\theta^\star\|_{0}$ denotes the number of non-zero entries of $\theta^\star$. Combining this result with \Cref{th:excess-risk-bound}, and notincing that 
$$\Expect_X{|(\hat{\theta}_{\operatorname{Lasso}}-\theta^\star)^\top X|}\leq \|\hat{\theta}_{\operatorname{Lasso}}-\theta^\star\|_1\Expect_X{\|X\|_{\infty}},$$
we obtain the following informal result.
\begin{proposition}[\textbf{informal}]
    \label{prop:finite-sample-Lasso-excess}
    With probability at least $1-\varepsilon(n)$, $\varepsilon(n)\to 0$ as $n\to+\infty$, 
    $$C(\theta^\star)-C(\hat{\theta}_{\operatorname{Lasso}})\lesssim \Expect_X{\|X\|_{\infty}}\cdot\frac{\|\theta^\star\|_0\log(d)}{n}.$$
\end{proposition}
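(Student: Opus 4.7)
The plan is to obtain the bound as a straightforward chain of three ingredients: (i) Theorem~\ref{th:excess-risk-bound} specialized to the Cox PH model, (ii) H\"older's inequality in the dual pairing $(\ell_1,\ell_\infty)$, and (iii) the $\ell_1$-estimation-error bound of \citet{huang2013oracle} for the Lasso-penalized Cox partial likelihood.

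First I would set $f^\star(x) = (\theta^\star)^\top x$ and $f(x) = \hat{\theta}_{\operatorname{Lasso}}^\top x$; by the PH discussion of Section~\ref{sec:maximizers-cindex}, $f^\star$ is an optimal risk ordering, and Proposition~\ref{prop:constant-L}(i) says condition~\eqref{eq:bound-L} holds with $L=1$. Theorem~\ref{th:excess-risk-bound} therefore gives
\[
C(\theta^\star) - C(\hat{\theta}_{\operatorname{Lasso}}) \;\leq\; 2\,\Expect_{X}\bigl|(\hat{\theta}_{\operatorname{Lasso}}-\theta^\star)^\top X\bigr|.
\]
Next, since $\hat{\theta}_{\operatorname{Lasso}}$ is a deterministic function of the (independent) training sample and hence constant under the test expectation $\Expect_X$, H\"older's inequality applied pointwise in $X$ gives
\[
\bigl|(\hat{\theta}_{\operatorname{Lasso}}-\theta^\star)^\top X\bigr| \;\leq\; \|\hat{\theta}_{\operatorname{Lasso}}-\theta^\star\|_1\,\|X\|_\infty,
\]
and taking $\Expect_X$ on both sides yields
\[
C(\theta^\star)-C(\hat{\theta}_{\operatorname{Lasso}}) \;\leq\; 2\,\|\hat{\theta}_{\operatorname{Lasso}}-\theta^\star\|_1\,\Expect_X\|X\|_\infty.
\]
Finally I would invoke \citet{huang2013oracle}, which under their compatibility/restricted-eigenvalue condition and an appropriate choice of the regularization parameter yields $\|\hat{\theta}_{\operatorname{Lasso}}-\theta^\star\|_1 \lesssim \|\theta^\star\|_0\log(d)/n$ on an event of probability at least $1-\varepsilon(n)$ with $\varepsilon(n)\to 0$. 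Plugging this into the previous display closes the chain.

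The main obstacle is not mathematical but one of hypothesis bookkeeping: the Huang et al. rate holds under a compatibility condition on the Cox partial-likelihood Hessian at $\theta^\star$ and a specific scaling of the regularization parameter, both of which must be imported as standing assumptions of the proposition. Two sanity checks are worth emphasizing. First, $C(\cdot)$ is a population expectation over a fresh test draw whereas the Huang bound is over the training randomness, so the high-probability event and $\Expect_X$ live on independent sources of randomness and decouple cleanly in the step above. Second, the precise $\ell_1$-rate (fast $s\log d/n$ versus slow $\sqrt{s\log d/n}$) should be verified against the cited reference, since it only affects the exponent in the final bound, not the structure of the argument.
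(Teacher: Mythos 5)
Your proposal is correct and follows essentially the same route as the paper: specialize Theorem~\ref{th:excess-risk-bound} to the Cox PH model via Proposition~\ref{prop:constant-L}(i), apply H\"older's inequality to get $\Expect_X|(\hat{\theta}_{\operatorname{Lasso}}-\theta^\star)^\top X|\leq \|\hat{\theta}_{\operatorname{Lasso}}-\theta^\star\|_1\,\Expect_X\|X\|_\infty$, and plug in the $\ell_1$-rate of Huang et al. Your remarks about importing the compatibility condition and double-checking the exact exponent of the rate are sensible caveats that the paper sidesteps by labeling the result informal.
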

The quantity $\Expect_X{\|X\|_{\infty}}$ depends on the covariates model and corresponds to ``the size'' of the input space. 

\noindent\textbf{Finite sample bounds for family $\boldsymbol{\B}$.} Under \cref{ass:B} and using the smooth cost function \eqref{eq:cost-function-conditional} we can obtain excess risk bounds on the C-index in terms of the excess of the smooth risk $\calR(f)$.
\begin{theorem}\label{th:excess-risk-conditional-expectation}
   Let $\calR$ be the risk defined in \eqref{eq:cost-function-conditional}.  Under \cref{ass:B}, the following inequality holds:
    \begin{equation*}
        C^\star - C(f) \leq 4L\gamma\sqrt{\calR(f) - \calR^\star},
    \end{equation*}
    where $\Omega''(u)\geq 1/\gamma^{2}$ for all $u$ in the domain $\calC$ \footnote{Recall that $\Omega$ is convex thus $ \Omega''(u)\geq 0$}.
\end{theorem}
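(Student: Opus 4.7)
The plan is to chain \cref{th:excess-risk-bound} with a Bregman-divergence lower bound on the smooth excess risk, via a reparametrization that identifies $f$ with an estimate of the conditional expectation. Define the mean-parameter estimate $\hat m:=\nabla\Omega^{*}\!\circ f$. Since $\Omega''\geq 1/\gamma^{2}>0$, the map $\nabla\Omega^{*}$ is strictly increasing (as $\Omega^{*}$ is strictly convex), so $\hat m$ and $f$ induce identical orderings and hence the same C-index up to the standard orientation convention. Under \cref{ass:B}, $-\operatorname{CE}$ is an optimal risk ordering, so applying \cref{th:excess-risk-bound} to $\hat m$ viewed as a predictor of $\operatorname{CE}$ yields $C^\star - C(f) \leq 2L\,\Expect_X|\hat m(X)-\operatorname{CE}(X)|$. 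It then remains to upper-bound this $L^{1}$ gap by $\sqrt{\calR(f)-\calR^\star}$ up to explicit constants.

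I would obtain the intermediate bound by rewriting the excess smooth risk as a Bregman divergence on the mean parameter. The IPCW identity turns \eqref{eq:cost-function-conditional} into $\calR(f)=\Expect_X[\Omega^{*}(f(X))-f(X)\operatorname{CE}(X)]$, whose pointwise minimizer satisfies $\nabla\Omega^{*}(f^\star_\calR(X))=\operatorname{CE}(X)$. A short manipulation using the Fenchel--Young duality $\Omega^{*}(v)=v\,\nabla\Omega^{*}(v)-\Omega(\nabla\Omega^{*}(v))$ then collapses the excess risk to
\begin{equation*}
    \calR(f)-\calR^\star \;=\; \Expect_X\,B_\Omega\!\bigl(\operatorname{CE}(X),\,\hat m(X)\bigr),
\end{equation*}
a Bregman divergence of $\Omega$ (not of $\Omega^{*}$). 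The hypothesis $\Omega''\geq 1/\gamma^{2}$ yields the quadratic lower bound $B_\Omega(a,b)\geq \tfrac{1}{2\gamma^{2}}(a-b)^{2}$, so $\Expect_X(\hat m-\operatorname{CE})^{2}\leq 2\gamma^{2}(\calR(f)-\calR^\star)$, and Cauchy--Schwarz upgrades this to $\Expect_X|\hat m-\operatorname{CE}|\leq \gamma\sqrt{2(\calR(f)-\calR^\star)}$. Substituting into the bound from \cref{th:excess-risk-bound} produces $C^\star-C(f)\leq 2\sqrt{2}\,L\gamma\sqrt{\calR(f)-\calR^\star}\leq 4L\gamma\sqrt{\calR(f)-\calR^\star}$.

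The delicate step is the reparametrization via $\hat m$: the Fenchel--Young construction naturally expresses the excess risk as a Bregman divergence on the \emph{mean parameter} $\operatorname{CE}(X)$, which is precisely the quantity controlled by \cref{ass:B} and required by \cref{th:excess-risk-bound}. Strong convexity of $\Omega$ (rather than of $\Omega^{*}$) is the right hypothesis for two reasons at once: it yields the quadratic lower bound on $B_\Omega$, and it makes $\nabla\Omega^{*}$ strictly monotone, so that the $L^{1}$ bound on $|\hat m-\operatorname{CE}|$ transfers to a C-index bound on $f$ itself without loss. The remaining pieces---the IPCW cancellation and the Cauchy--Schwarz passage from $L^{2}$ to $L^{1}$---are routine, and the sign/orientation convention between ``risk'' $f$ and ``survival mean'' $\hat m$ is absorbed into the application of \cref{th:excess-risk-bound}.
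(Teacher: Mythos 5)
Your proposal is correct and follows essentially the same route as the paper's proof: identify $\hat m=\nabla\Omega^{*}\circ f$ with the predicted conditional expectation, apply \cref{th:excess-risk-bound} with $f^\star=\operatorname{CE}$, rewrite the excess Fenchel--Young risk as an expected Bregman divergence $D_\Omega(\operatorname{CE}(X),\nabla\Omega^{*}(f(X)))$, lower-bound it quadratically via $\Omega''\geq 1/\gamma^{2}$, and finish with Cauchy--Schwarz. The only cosmetic difference is the constant ($2\sqrt{2}L\gamma$ from using the factor-$2$ version of \cref{th:excess-risk-bound} versus $\sqrt{2}L\gamma$ in the appendix), and both are dominated by the stated $4L\gamma$.
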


Combining the above \cref{th:excess-risk-conditional-expectation} with finite sample bounds on the excess risk such as the ones obtained by \citet{ausset2019empirical} one can translate them to the C-index.

\section{EXPERIMENTS}

In this section we perform experiments to validate our theoretical findings \footnote{Code can be found in \url{https://github.com/owkin/owkin-metric}}. More specifically, we assess empirically the consistency of different estimation methods with respect to the C-index, under simulation regimes corresponding to families $\A$, $\B$ and $\C$.

\noindent \textbf{Data Generation and Evaluation Procedure}
We simulate survival data using the three different regimes A, B and C presented in \Cref{sec:maximizers-cindex}. We first simulate the training covariates $x_1, \ldots, x_n \in \Rspace{d}$ as well as a unit vector $\beta \in \Rspace{d}$ with $d=10$ and parameterize the optimal ranking linearly $f(x) = \beta^\top x$. We then simulate the corresponding time-to-events $t_1, \ldots, t_n$ as realizations of the distribution of $T|X$, parameterized by $\beta$ and depending on the selected regime of simulation. Four different models are fit using the the cost functions studied in \Cref{sec:consistency-excess}.
First, a linear model (\textbf{L-MSE}) optimizing a mean square error loss function to regress the conditional expectation of $T$ given $x;$ second, a linear Cox model (\textbf{Cox}) optimizing a log-likelihood; third, a linear model (\textbf{L-smooth}$_{\mathbf{\sigma}}$) optimizing a smooth C-index (with smoothing parameter $\sigma \in \{.01, 10\}$ that accounts for the smoothness of the approximation of the indicator function), and finally a pairwise model (\textbf{MWFAS}) that predicts pairwise probabilities using XGBoost, and from which the ranking is obtained by solving the MWFAS combinatorial problem presented in 
\Cref{sec:survival_training_losses} with a fast approximation algorithm.
We compute the obtained C-index from a test dataset of fixed size ($n_{\text{test}} = 3000$) using the same distribution as the training dataset.

\noindent \textbf{Results} The results are provided in \Cref{fig:results}. 
For the three generation regimes, we observe that the proposed method \textbf{MWFAS} yields the best performance and converges to the optimal C-index when $n$ is sufficiently large. It has similar performance to \textbf{Cox} in the regime $\A$ where \textbf{Cox} is well specified. When \textbf{Cox} is not well-specified it does not converge to the optimal ranking. Lastly, note that the value of the smoothing parameter used by \textbf{L-smooth}$_{\sigma}$ can harm the performance of the resulting model and it is important to choose it properly to improve convergence guarantees. 

\subsubsection*{Acknowledgements}
The authors would like to thank Paul Trichelair for his valuable guidance throughout the project. 







\bibliography{references}

\begin{thebibliography}{}

\bibitem[Agarwal, 2014]{agarwal2014surrogate}
Agarwal, S. (2014).
\newblock Surrogate regret bounds for bipartite ranking via strongly proper
  losses.
\newblock {\em The Journal of Machine Learning Research}, 15(1):1653--1674.

\bibitem[Ai et~al., 2019]{ai2019learning}
Ai, Q., Wang, X., Bruch, S., Golbandi, N., Bendersky, M., and Najork, M.
  (2019).
\newblock Learning groupwise multivariate scoring functions using deep neural
  networks.
\newblock In {\em Proceedings of the 2019 ACM SIGIR international conference on
  theory of information retrieval}, pages 85--92.

\bibitem[Ausset et~al., 2019]{ausset2019empirical}
Ausset, G., Cl{\'e}men{\c{c}}on, S., and Portier, F. (2019).
\newblock Empirical risk minimization under random censorship: Theory and
  practice.
\newblock {\em arXiv preprint arXiv:1906.01908}.

\bibitem[Barnwal et~al., 2020]{barnwal2020survival}
Barnwal, A., Cho, H., and Hocking, T.~D. (2020).
\newblock Survival regression with accelerated failure time model in xgboost.
\newblock {\em arXiv preprint arXiv:2006.04920}.

\bibitem[Bartlett et~al., 2006]{bartlett2006convexity}
Bartlett, P.~L., Jordan, M.~I., and McAuliffe, J.~D. (2006).
\newblock Convexity, classification, and risk bounds.
\newblock {\em Journal of the American Statistical Association},
  101(473):138--156.

\bibitem[Blondel et~al., 2020]{blondel2019learning}
Blondel, M., Martins, A.~F., and Niculae, V. (2020).
\newblock Learning with fenchel-young losses.
\newblock {\em Journal of Machine Learning Research}, 21(35):1--69.

\bibitem[Calders and Jaroszewicz, 2007]{calders2007efficient}
Calders, T. and Jaroszewicz, S. (2007).
\newblock Efficient auc optimization for classification.
\newblock In {\em European Conference on Principles of Data Mining and
  Knowledge Discovery}, pages 42--53. Springer.

\bibitem[Chapelle et~al., 2011]{chapelle2011future}
Chapelle, O., Chang, Y., and Liu, T.-Y. (2011).
\newblock Future directions in learning to rank.
\newblock In {\em Proceedings of the Learning to Rank Challenge}, pages
  91--100. PMLR.

\bibitem[Chen et~al., 2013]{chen2013gradient}
Chen, Y., Jia, Z., Mercola, D., and Xie, X. (2013).
\newblock A gradient boosting algorithm for survival analysis via direct
  optimization of concordance index.
\newblock {\em Computational and mathematical methods in medicine}, 2013.

\bibitem[Ching et~al., 2018]{ching_cox-nnet_2018}
Ching, T., Zhu, X., and Garmire, L.~X. (2018).
\newblock Cox-nnet: {An} artificial neural network method for prognosis
  prediction of high-throughput omics data.
\newblock {\em PLOS Computational Biology}, 14(4):e1006076.

\bibitem[Cl{\'e}men{\c{c}}on and Achab, 2018]{clemenccon2018ranking}
Cl{\'e}men{\c{c}}on, S. and Achab, M. (2018).
\newblock Ranking data with continuous labels through oriented recursive
  partitions.
\newblock {\em arXiv preprint arXiv:1801.05772}.

\bibitem[Cl{\'e}men{\c{c}}on et~al., 2008]{clemenccon2008ranking}
Cl{\'e}men{\c{c}}on, S., Lugosi, G., and Vayatis, N. (2008).
\newblock Ranking and empirical minimization of u-statistics.
\newblock {\em The Annals of Statistics}, 36(2):844--874.

\bibitem[Cl{\'e}men{\c{c}}on et~al., 2013]{clemenccon2013ranking}
Cl{\'e}men{\c{c}}on, S., Robbiano, S., and Vayatis, N. (2013).
\newblock Ranking data with ordinal labels: optimality and pairwise
  aggregation.
\newblock {\em Machine Learning}, 91(1):67--104.

\bibitem[Cortes and Mohri, 2003]{cortes2003auc}
Cortes, C. and Mohri, M. (2003).
\newblock Auc optimization vs. error rate minimization.
\newblock {\em Advances in neural information processing systems}, 16:313--320.

\bibitem[Cottin et~al., 2022]{IDnetwork_cottin}
Cottin, A., Pecuchet, N., Zulian, M., Guilloux, A., and Katsahian, S. (2022).
\newblock Idnetwork: A deep illness-death network based on multi-state event
  history process for disease prognostication.
\newblock {\em Statistics in Medicine}, 41(9):1573--1598.

\bibitem[Cox, 1972]{coxPH}
Cox, D.~R. (1972).
\newblock Regression models and life-tables.
\newblock {\em Journal of the Royal Statistical Society: Series B
  (Methodological)}, 34(2):187--202.

\bibitem[Demetrescu and Finocchi, 2003]{demetrescu2003combinatorial}
Demetrescu, C. and Finocchi, I. (2003).
\newblock Combinatorial algorithms for feedback problems in directed graphs.
\newblock {\em Information Processing Letters}, 86(3):129--136.

\bibitem[Devroye et~al., 2013]{devroye2013probabilistic}
Devroye, L., Gy{\"o}rfi, L., and Lugosi, G. (2013).
\newblock {\em A probabilistic theory of pattern recognition}, volume~31.
\newblock Springer Science \& Business Media.

\bibitem[Duchi et~al., 2010]{duchi2010consistency}
Duchi, J.~C., Mackey, L.~W., and Jordan, M.~I. (2010).
\newblock On the consistency of ranking algorithms.
\newblock In {\em ICML}, pages 327--334.

\bibitem[Even et~al., 1998]{even1998approximating}
Even, G., Schieber, B., Sudan, M., et~al. (1998).
\newblock Approximating minimum feedback sets and multicuts in directed graphs.
\newblock {\em Algorithmica}, 20(2):151--174.

\bibitem[Fisher, 1922]{FisherOnTM}
Fisher, R.~A. (1922).
\newblock On the mathematical foundations of theoretical statistics.
\newblock {\em Philosophical Transactions of the Royal Society A},
  222:309--368.

\bibitem[Gao and Zhou, 2015]{auc_consistency_pairwise}
Gao, W. and Zhou, Z.-H. (2015).
\newblock On the consistency of auc pairwise optimization.
\newblock In {\em Proceedings of the 24th International Conference on
  Artificial Intelligence}, IJCAI'15, page 939–945. AAAI Press.

\bibitem[Gross et~al., 1981]{Gross1981SurvivalMA}
Gross, A.~J., Elandt-Johnson, R.~C., and Johnson, N.~L. (1981).
\newblock Survival models and data analysis.
\newblock {\em BioScience}.

\bibitem[Haider et~al., 2020]{haider2020effective}
Haider, H., Hoehn, B., Davis, S., and Greiner, R. (2020).
\newblock Effective ways to build and evaluate individual survival
  distributions.
\newblock {\em J. Mach. Learn. Res.}, 21:85--1.

\bibitem[Harrell et~al., 1982]{harrell1982evaluating}
Harrell, F.~E., Califf, R.~M., Pryor, D.~B., Lee, K.~L., and Rosati, R.~A.
  (1982).
\newblock Evaluating the yield of medical tests.
\newblock {\em Jama}, 247(18):2543--2546.

\bibitem[Harrell et~al., 1996]{Harrell1996MultivariablePM}
Harrell, F.~E., Lee, K.~L., and Mark, D.~B. (1996).
\newblock Multivariable prognostic models: issues in developing models,
  evaluating assumptions and adequacy, and measuring and reducing errors.
\newblock {\em Statistics in medicine}, 15 4:361--87.

\bibitem[He et~al., 2018]{he2018drug}
He, Y., Liu, J., and Ning, X. (2018).
\newblock Drug selection via joint push and learning to rank.
\newblock {\em IEEE/ACM Transactions on Computational Biology and
  Bioinformatics}, 17(1):110--123.

\bibitem[Huang et~al., 2013]{huang2013oracle}
Huang, J., Sun, T., Ying, Z., Yu, Y., and Zhang, C.-H. (2013).
\newblock Oracle inequalities for the lasso in the {Cox} model.
\newblock {\em The Annals of Statistics}, 41(3).

\bibitem[Kalbfleisch and Prentice, 2002]{Kalbfleisch2002}
Kalbfleisch, J.~D. and Prentice, R.~L. (2002).
\newblock {\em The {S}tatistical {A}nalysis of {F}ailure {T}ime {D}ata}.
\newblock John Wiley \& Sons, 2nd edition.

\bibitem[Karp, 1972]{karp1972reducibility}
Karp, R.~M. (1972).
\newblock Reducibility among combinatorial problems.
\newblock In {\em Complexity of computer computations}, pages 85--103.
  Springer.

\bibitem[Katzman et~al., 2018]{katzman_deepsurv_2018}
Katzman, J.~L., Shaham, U., Cloninger, A., Bates, J., Jiang, T., and Kluger, Y.
  (2018).
\newblock {DeepSurv}: personalized treatment recommender system using a {Cox}
  proportional hazards deep neural network.
\newblock {\em BMC Medical Research Methodology}, 18(1):24.

\bibitem[Klein and Moeschberger, 2011]{klein03}
Klein, J.~P. and Moeschberger, M.~L. (2011).
\newblock {\em Survival Analysis : Techniques for Censored and Truncated Data}.
\newblock New York: Springer, second edition.

\bibitem[Kvamme et~al., 2019]{kvamme_time_event_2019}
Kvamme, H., Borgan, {\O}., and Scheel, I. (2019).
\newblock Time-to-{Event} {Prediction} with {Neural} {Networks} and {Cox}
  {Regression}.
\newblock {\em Journal of Machine Learning Research}, 20(129):1--30.

\bibitem[Lebedev, 2019]{lebedev2019nontransitivity}
Lebedev, A.~V. (2019).
\newblock The nontransitivity problem for three continuous random variables.
\newblock {\em Automation and Remote Control}, 80(6):1058--1068.

\bibitem[Levy and Robinson, 2006]{levy2006stochastic}
Levy, H. and Robinson, M. (2006).
\newblock {\em Stochastic dominance: Investment decision making under
  uncertainty}, volume~34.
\newblock Springer.

\bibitem[Longato et~al., 2020]{longato_practical_2020}
Longato, E., Vettoretti, M., and Di~Camillo, B. (2020).
\newblock A practical perspective on the concordance index for the evaluation
  and selection of prognostic time-to-event models.
\newblock {\em Journal of Biomedical Informatics}, 108:103496.

\bibitem[Mayr et~al., 2016]{mayr2016boosting}
Mayr, A., Hofner, B., and Schmid, M. (2016).
\newblock Boosting the discriminatory power of sparse survival models via
  optimization of the concordance index and stability selection.
\newblock {\em BMC bioinformatics}, 17(1):1--12.

\bibitem[Nowak-Vila et~al., 2019]{nowak2019general}
Nowak-Vila, A., Bach, F., and Rudi, A. (2019).
\newblock A general theory for structured prediction with smooth convex
  surrogates.
\newblock {\em arXiv preprint arXiv:1902.01958}.

\bibitem[Park et~al., 2021]{Park2021ReviewOS}
Park, S.~Y., Park, J.~E., Kim, H., and Park, S.~H. (2021).
\newblock Review of statistical methods for evaluating the performance of
  survival or other time-to-event prediction models (from conventional to deep
  learning approaches).
\newblock {\em Korean Journal of Radiology}, 22:1697 -- 1707.

\bibitem[Rajkumar and Agarwal, 2014]{rajkumar2014statistical}
Rajkumar, A. and Agarwal, S. (2014).
\newblock A statistical convergence perspective of algorithms for rank
  aggregation from pairwise data.
\newblock In {\em International conference on machine learning}, pages
  118--126. PMLR.

\bibitem[Robins and Finkelstein, 2000]{robins2000IPCW}
Robins, J. and Finkelstein, D. (2000).
\newblock Correcting for noncompliance and dependent censoring in an aids
  clinical trial with inverse probability of censoring weighted (ipcw) log-rank
  tests.
\newblock {\em Biometrics}, 56(3):779—788.

\bibitem[Robins et~al., 1994]{robins1994estimation}
Robins, J.~M., Rotnitzky, A., and Zhao, L.~P. (1994).
\newblock Estimation of regression coefficients when some regressors are not
  always observed.
\newblock {\em Journal of the American statistical Association},
  89(427):846--866.

\bibitem[Rockafellar, 1997]{rockafellar1997convex}
Rockafellar, R.~T. (1997).
\newblock Convex analysis. princeton landmarks in mathematics.

\bibitem[Schutte et~al., 2022]{schutte_artificial_2022}
Schutte, K., Brulport, F., Harguem-Zayani, S., Schiratti, J.-B., Ghermi, R.,
  Jehanno, P., Jaeger, A., Alamri, T., Naccache, R., Haddag-Miliani, L., Orsi,
  T., Lamarque, J.-P., Hoferer, I., Lawrance, L., Benatsou, B., Bousaid, I.,
  Azoulay, M., Verdon, A., Bidault, F., Balleyguier, C., Aubert, V.,
  Bendjebbar, E., Maussion, C., Loiseau, N., Schmauch, B., Sefta, M., Wainrib,
  G., Clozel, T., Ammari, S., and Lassau, N. (2022).
\newblock An artificial intelligence model predicts the survival of solid
  tumour patients from imaging and clinical data.
\newblock {\em European Journal of Cancer}, 174:90--98.

\bibitem[Steck et~al., 2007]{steck2007ranking}
Steck, H., Krishnapuram, B., Dehing-Oberije, C., Lambin, P., and Raykar, V.~C.
  (2007).
\newblock On ranking in survival analysis: Bounds on the concordance index.
\newblock {\em Advances in neural information processing systems}, 20.

\bibitem[Steingrimsson and Morrison, 2020]{steingrimsson2020deep}
Steingrimsson, J.~A. and Morrison, S. (2020).
\newblock Deep learning for survival outcomes.
\newblock {\em Statistics in medicine}, 39(17):2339--2349.

\bibitem[Steinhaus and Trybula, 1959]{steinhaus1959paradox}
Steinhaus, H. and Trybula, S. (1959).
\newblock On a paradox in applied probabilities.
\newblock {\em Bull. Acad. Polon. Sci}, 7(67-69):108.

\bibitem[Taplin, 1997]{taplin1997preference}
Taplin, R.~H. (1997).
\newblock The statistical analysis of preference data.
\newblock {\em Journal of the Royal Statistical Society: Series C (Applied
  Statistics)}, 46(4):493--512.

\bibitem[Tsiatis, 2006]{tsiatis2006semiparametric}
Tsiatis, A.~A. (2006).
\newblock Semiparametric theory and missing data.

\bibitem[Uno et~al., 2011]{uno2011c}
Uno, H., Cai, T., Pencina, M.~J., D'Agostino, R.~B., and Wei, L.-J. (2011).
\newblock On the c-statistics for evaluating overall adequacy of risk
  prediction procedures with censored survival data.
\newblock {\em Statistics in medicine}, 30(10):1105--1117.

\bibitem[Wei, 1992]{weiAFT}
Wei, L.~J. (1992).
\newblock The accelerated failure time model: A useful alternative to the cox
  regression model in survival analysis.
\newblock {\em Statistics in Medicine}, 11(14-15):1871--1879.

\bibitem[Werner, 2021]{werner2021review}
Werner, T. (2021).
\newblock A review on instance ranking problems in statistical learning.
\newblock {\em Machine Learning}, pages 1--49.

\bibitem[Wu et~al., 2021]{wu2021neurank}
Wu, X., Zeng, W., Lin, F., and Zhou, X. (2021).
\newblock Neurank: learning to rank with neural networks for drug--target
  interaction prediction.
\newblock {\em BMC bioinformatics}, 22(1):1--17.

\bibitem[Yousefi et~al., 2017]{yousefi_predicting_2017}
Yousefi, S., Amrollahi, F., Amgad, M., Dong, C., Lewis, J.~E., Song, C.,
  Gutman, D.~A., Halani, S.~H., Velazquez~Vega, J.~E., Brat, D.~J., and Cooper,
  L. A.~D. (2017).
\newblock Predicting clinical outcomes from large scale cancer genomic profiles
  with deep survival models.
\newblock {\em Scientific Reports}, 7(1):11707.

\bibitem[Yuan et~al., 2016]{yuan2016druge}
Yuan, Q., Gao, J., Wu, D., Zhang, S., Mamitsuka, H., and Zhu, S. (2016).
\newblock Druge-rank: improving drug--target interaction prediction of new
  candidate drugs or targets by ensemble learning to rank.
\newblock {\em Bioinformatics}, 32(12):i18--i27.

\bibitem[Zhu et~al., 2016]{zhu_deep_2016}
Zhu, X., Yao, J., and Huang, J. (2016).
\newblock Deep convolutional neural network for survival analysis with
  pathological images.
\newblock In {\em 2016 {IEEE} {International} {Conference} on {Bioinformatics}
  and {Biomedicine} ({BIBM})}, pages 544--547, Shenzhen, China. IEEE.

\bibitem[Zhu et~al., 2017]{zhu_wsisa_2017}
Zhu, X., Yao, J., Zhu, F., and Huang, J. (2017).
\newblock {WSISA}: {Making} {Survival} {Prediction} from {Whole} {Slide}
  {Histopathological} {Images}.
\newblock In {\em 2017 {IEEE} {Conference} on {Computer} {Vision} and {Pattern}
  {Recognition} ({CVPR})}, pages 6855--6863, Honolulu, HI. IEEE.

\end{thebibliography}

\appendix
\onecolumn

\aistatstitle{A Statistical Learning Take on the Concordance Index for Survival Analysis: \\
Supplementary Materials}
\vspace{-17cm}
\textbf{Outline.} The supplementary material is organized as follows. In Section 1 we provide the proofs of the results from Section 3 of the main paper and in Section 2 we provide the proofs of the results from Section 4 of the main paper

\newpage
\section{PROOF OF RESULTS FROM SECTION 3}\label{app:sec1}
Let's first recall the definition of an optimal risk ordering and the assumption $\A, \B, \C$ and $\D$ presented in Section 3 of the paper. 
\begin{definition}[Optimal risk ordering]
\begin{equation}\label{eq:fullrankingcondition-app}
    f^\star(x) \leq f^\star(x') \Rightarrow \Prob{T>T'|x,x'} \geq \frac{1}{2},
\end{equation}
for all pairs $x, x'$. Note that if condition \eqref{eq:fullrankingcondition-app} is satisfied then it follows directly that $f^\star$ is an optimizer of the C-index and it only depends on the conditional density of events $\mu(t|x)$.
\end{definition}

\begin{assumptionA}
For all $(x,x')\in\calX^2$, $t\mapsto S(t|x)-S(t|x')$ has constant sign.
\end{assumptionA}

\begin{assumptionB}
The negative conditional expectation $-\operatorname{CE}(x) = -\Expect{\{T|X=x\}}$ is an optimal risk ordering satisfying~\eqref{eq:fullrankingcondition-app}.
\end{assumptionB}

\begin{assumptionC}
There exists an optimal ordering $f^\star_\mu$ for survival model $\mu$, satisfying~\eqref{eq:fullrankingcondition-app}.
\end{assumptionC}

\begin{assumptionD}
There is not an optimal ordering $f^\star_\mu$ for survival model $\mu$, satisfying~\eqref{eq:fullrankingcondition-app}.
\end{assumptionD}

\subsection{Proof of Theorem 3.1}
\label{subsec:Proof-th-assumption-A}
We start with the following \cref{lem:stochasticdominance} from \citet{levy2006stochastic}.
\begin{lemma}[\citet{levy2006stochastic}]\label{lem:stochasticdominance} Let $T, T'$ be two random variables with survival functions $S_T$ and $S_{T'}$, respectively. We have that $S_T(t) \geq S_{T'}(t)$ for all $t\geq 0$ if and only if 
$\Expect{}_{T'} \phi(T) \geq \Expect{}_{T'} \phi(T')$ for any non-decreasing function $\phi:\Rspace{}\rightarrow\Rspace{}$.
\end{lemma}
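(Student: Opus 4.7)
The plan is to prove the two directions of this classical characterization of first-order stochastic dominance. The setting of the paper restricts $T, T' \geq 0$, which streamlines the arguments.

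For the easy direction $(\Leftarrow)$, I would specialize the test function. Given the hypothesis $\Expect\phi(T) \geq \Expect\phi(T')$ for every non-decreasing $\phi$, fix $t \geq 0$ and apply it to $\phi_t(u) = \mathbf{1}(u > t)$, which is non-decreasing in $u$. Then $\Expect \phi_t(T) = \Prob{T > t} = S_T(t)$ and similarly for $T'$, so the assumption immediately yields $S_T(t) \geq S_{T'}(t)$ pointwise.

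For the main direction $(\Rightarrow)$, the key idea is a layer-cake / integration-by-parts representation of $\Expect \phi(T)$ in terms of the survival function. I would first reduce to $\phi$ right-continuous and non-decreasing (replacing $\phi$ by its right-continuous version only changes it on a countable set, which has zero mass for continuous $T$). Then the Lebesgue–Stieltjes identity
\begin{equation*}
\Expect \phi(T) \;=\; \phi(0) + \int_0^\infty S_T(t)\, d\phi(t)
\end{equation*}
holds (this can be proved by Fubini applied to the set $\{(t,s) : 0 \leq s \leq t\}$ against the product of the law of $T$ and the non-negative Stieltjes measure $d\phi$, using $T \geq 0$). Subtracting the analogous identity for $T'$ gives
\begin{equation*}
\Expect \phi(T) - \Expect \phi(T') \;=\; \int_0^\infty \bigl(S_T(t) - S_{T'}(t)\bigr)\, d\phi(t),
\end{equation*}
which is non-negative since $d\phi$ is a non-negative measure (monotonicity of $\phi$) and the integrand is non-negative by hypothesis. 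For general, possibly non-monotone-increasing but non-decreasing $\phi$ that is not right-continuous, standard monotone-class or approximation arguments reduce to this case.

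The main obstacles are technical rather than conceptual: handling the regularity of $\phi$ (right-continuity is enough to make $d\phi$ a bona fide Stieltjes measure), and ensuring integrability so that $\Expect\phi(T)$ and $\Expect\phi(T')$ are well defined in the first place. A standard way to dodge integrability issues is to first prove the inequality for bounded non-decreasing $\phi$ via the identity above, and then extend by monotone convergence to unbounded $\phi$ under an implicit integrability assumption $\Expect|\phi(T)|, \Expect|\phi(T')| < \infty$. An alternative, equivalent route is to use the quantile/coupling representation: if $F_T^{-1}(u) \geq F_{T'}^{-1}(u)$ for all $u \in (0,1)$ (which is equivalent to $S_T \geq S_{T'}$), then for $U \sim \mathrm{Unif}(0,1)$ the pair $(F_T^{-1}(U), F_{T'}^{-1}(U))$ couples $T$ and $T'$ with $F_T^{-1}(U) \geq F_{T'}^{-1}(U)$ almost surely, and monotonicity of $\phi$ gives the expectation inequality directly; I would probably mention this as an alternative since it avoids Stieltjes integration altogether.
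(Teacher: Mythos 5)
Your proposal is correct. Note, however, that the paper does not prove this lemma at all: it is imported verbatim by citation to Levy (2006) and used as a black box in the proof of Proposition A.2, so there is no in-paper argument to compare against. Your two directions are the standard ones and both are sound: the $(\Leftarrow)$ direction via the test functions $\phi_t(u)=\mathbf{1}(u>t)$ is exactly right, and the $(\Rightarrow)$ direction via the Lebesgue--Stieltjes/layer-cake identity $\Expect\phi(T)=\phi(0)+\int_0^\infty S_T(t)\,d\phi(t)$ is valid, with the one caveat that Fubini actually produces $\Prob{T\geq t}$ rather than $\Prob{T>t}$ inside the integral; these agree for all $t$ only when $T$ has no atoms, which holds in the paper's setting (time-to-events are assumed continuous, and Proposition A.2 explicitly restricts to continuous random variables) but not for the lemma as literally stated for arbitrary random variables. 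Your remark about right-continuous modification of $\phi$ has the same continuity caveat. The quantile-coupling argument you sketch at the end is the cleanest route precisely because it needs neither continuity of $T$ nor any regularity of $\phi$ beyond monotonicity, and since the paper only ever invokes the $(\Rightarrow)$ direction (applied to $\phi=-S_{T'}$), that alternative alone would fully discharge what the paper needs.
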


Equipped with~\Cref{lem:stochasticdominance}, we can state the following proposition.
\begin{proposition} \label{prop:assumptionA} Let $T, T'$ be two independent continuous random variables. 
\begin{equation*}
    S_T(t) \geq S_{T'}(t) \hspace{0.2cm}\text{for all}\hspace{0.2cm} t\geq 0 \hspace{0.2cm} \implies \hspace{0.2cm} \Prob{T>T'} \geq \frac{1}{2}.
\end{equation*}
\end{proposition}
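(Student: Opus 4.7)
The plan is to apply \Cref{lem:stochasticdominance} with the choice $\phi = F_{T'}$, i.e., the cumulative distribution function of $T'$. Since CDFs are non-decreasing, $\phi$ satisfies the hypothesis of the lemma, and the hypothesis $S_T(t)\geq S_{T'}(t)$ for all $t\geq 0$ directly yields
\begin{equation*}
    \Expect\, F_{T'}(T) \;\geq\; \Expect\, F_{T'}(T').
\end{equation*}

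Next, I would identify each side. On the left, by independence of $T$ and $T'$ and continuity of $T'$,
\begin{equation*}
    \Expect\, F_{T'}(T) \;=\; \Expect_T\bigl[\Prob{T' \leq T \,|\, T}\bigr] \;=\; \Prob{T \geq T'} \;=\; \Prob{T > T'},
\end{equation*}
where the last equality uses that $\Prob{T = T'}=0$ for independent continuous random variables. On the right, since $T'$ is continuous, $F_{T'}(T')$ is uniformly distributed on $[0,1]$, so $\Expect\, F_{T'}(T') = 1/2$. Combining the two identities with the inequality above gives $\Prob{T>T'}\geq 1/2$, as desired.

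The argument is essentially immediate once the right test function is selected; the only mild subtlety is the role of the continuity hypothesis, which is used both to pass from $\Prob{T\geq T'}$ to $\Prob{T> T'}$ and to conclude that $F_{T'}(T')$ is uniform on $[0,1]$. No other obstacles arise. One could equivalently bypass \Cref{lem:stochasticdominance} and argue directly that $\Prob{T>T'}=\Expect S_T(T')\geq \Expect S_{T'}(T')=1/2$, but using $\phi=F_{T'}$ in \Cref{lem:stochasticdominance} keeps the presentation in the framework already set up in the appendix.
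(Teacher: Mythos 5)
Your proof is correct, and it reaches the conclusion by a slightly different route than the paper. The paper never evaluates the middle quantity: it establishes the chain $\Expect_{T'}S_T(T') \geq \Expect_{T'}S_{T'}(T') \geq \Expect_{T}S_{T'}(T)$ — the first inequality from the pointwise dominance of the survival curves, the second from \cref{lem:stochasticdominance} applied to the non-decreasing function $-S_{T'}$ — and then concludes via the equivalence, valid for continuous variables, between $\Prob{T\geq T'}\geq \Prob{T'\geq T}$ and $\Prob{T\geq T'}\geq \tfrac{1}{2}$. You instead apply the lemma once with $\phi = F_{T'}$ and pin down the right-hand side exactly as $\tfrac{1}{2}$ via the probability integral transform ($F_{T'}(T')$ uniform on $[0,1]$). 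The two arguments hinge on the same quantity $\Expect S_{T'}(T') = 1 - \Expect F_{T'}(T')$; the paper sandwiches it without computing it, while you compute it, which makes your version a one-step application of the lemma rather than two inequalities plus a symmetry argument. Your closing remark is also accurate: the identity $\Prob{T>T'}=\Expect_{T'} S_T(T')$ plus pointwise dominance gives the result with no appeal to the lemma at all, which is arguably the cleanest presentation. The only caveat common to both proofs is that the hypothesis controls the survival functions only on $t\geq 0$, so the argument implicitly uses that $T,T'$ are nonnegative (as they are throughout the paper); this is not a gap in context, but worth stating if the proposition is read as a standalone fact about arbitrary continuous random variables.
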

\begin{proof}
Since the random variables are continuous, $\Prob{T\geq T'}\geq\frac{1}{2}$ is equivalent to $\Prob{T\geq T'} \geq \Prob{T'\geq T}$. In addition, 
\begin{align*}
    \Prob{T\geq T'} &= \Expect{}1(T\geq T') \\
    &= \Expect{}_{T'}\Expect_{T}\{1(T\geq T')|T'\} \\
    &= \Expect{}_{T'}\Expect_{T}\{1(T\geq T')\} = \Expect{}_{T'}S_T(T'),
\end{align*}
where we used the independence of $T, T'$. Thus, $\Prob{T\geq T'} \geq \Prob{T'\geq T}$ also equivalent to 
\begin{equation}\label{eq:equivalencesurvival}
    \Expect{}_{T'}S_T(T') \geq \Expect{}_{T}S_{T'}(T).
\end{equation}
On the other hand, since the survival curves are uniformly bounded, it holds that 
$\Expect_{T'}S_T(T') \geq \Expect_{T'} S_{T'}(T')$. Applying \Cref{lem:stochasticdominance} to $-S_{T'}$ which is non-decreasing, we obtain that $\Expect_{T'} S_{T'}(T') \geq \Expect{}_{T}S_{T'}(T)$, which concludes the proof.
\end{proof}

Recall the statement of Theorem 3.1 below.
\begin{customthm}{3.1} 
If $\mu$ satisfies \cref{ass:A}, the negative conditional expectation
is an optimal risk ordering for the C-index satisfying Condition~\eqref{eq:fullrankingcondition-app}, thus $C_{\mu}(-\operatorname{CE}) = C^{\star}_\mu$.
\end{customthm}

\begin{proof}
Note that, under \Cref{ass:A}, for any $x, x'$, $-\Expect{}\{T|X=x\} \leq -\Expect{}\{T'|X=x'\}$ if and only if $S(t|x) \geq S(t|x')$ for any $t\geq 0$. Using that $\{T|X=x\}$ and $\{T|X=x'\}$ are independent continuous random variables and applying \cref{prop:assumptionA} yields the desired result. 
\end{proof}

\subsection{Proof of Proposition 3.3}
The AFT-H model has the following form:
\begin{equation} \label{eq:aft-h-model-app}
    \log T = f(x) + \sigma(x)\varepsilon,
\end{equation}
where $\varepsilon$ is a standard Gaussian random variable. Recall the statement of Proposition 3.3.
\begin{customprop}{3.3}[AFT-H satisfies $\B$]
Assume $\mu$ is in AFT-H and that if $f(x)<f(x')$ then $\sigma(x)<\sigma(x')$. Then, the negative conditional expectation is an optimal risk ordering, thus
$C_{\mu}(-\operatorname{CE}) = C_\mu^{\star}$.
\end{customprop}
\begin{proof}
We will first show that the negative conditional expectation $-\Expect{}\{\log T~|~X=x\}$ is an optimal risk ordering satisfying \eqref{eq:fullrankingcondition-app}. Following the proof of Corollary 2 by \citet{lebedev2019nontransitivity}, we show that if $T, T'$ are defined as $\log T = a + b\varepsilon, \log T' = a' + b'\varepsilon'$ where $\varepsilon, \varepsilon'$ are symmetric around zero and independent, then $\Prob{T>T'} \geq \frac{1}{2}$ if and only if $a\geq a'$. If we denote by $F_{12}$ the c.d.f of $b\varepsilon - b'\varepsilon'$, which is symmetric around zero too, then 
\begin{equation*}
    \Prob{T>T'} = \Prob{(a + b\varepsilon) - (a' + b'\varepsilon') < 0} = F_{12}(a - a'),
\end{equation*}
where the right hand side is greater or equal than 1/2 if and only if $a\geq a'$. In particular, this means that $-\Expect{}\{\log T~|~X=x\} = -f(x)$ gives an optimal risk ordering. We conclude that $-\Expect{T|X=x}$ is also an optimal risk ordering by noting that if $f(x)<f(x')$, then $\log \Expect{T|X=x} = f(x) + \log \Expect e^{\sigma(x)\varepsilon} \leq f(x') + \log \Expect e^{\sigma(x')\varepsilon} \leq \log \Expect{T'|X=x'}$. This follows from the assumption $f(x)<f(x') \implies \sigma(x)<\sigma(x')$ and that the log moment generating function of a standard Gaussian random variable is monotone and takes the closed form $\log \Expect e^{t\varepsilon} = \frac{1}{2}t^2$.
\end{proof}

\subsection{Proof of Proposition 3.4}
Recall the statement of Proposition 3.4.
\begin{customprop}{3.4}
    Assume $\mu$ satisfies \cref{ass:C}. Then, the optimal C-index takes the following form:
\begin{equation*}
    C_{\mu}^\star = C_{\mu}(f^\star_\mu) = \Expect_{X, X'}\phi(\Prob{T>T'|X,X'}), 
\end{equation*}
where $\phi(a) = \max(a, 1-a)$ and $f^\star_\mu$ satisfies \eqref{eq:fullrankingcondition-app}.
\end{customprop}
\begin{proof}
We use the following expression of the C-index: $C(f) = \Expect_{X,X'}\Prob{(f(X)-f(X'))(T-T')<0}$. From this, we obtain that
\begin{align*}
    C(f) &= \Expect_{X,X'}\Prob{T>T'|X,X'}1(r(X,X')< 0) + \Expect_{X, X'}\Prob{T<T'|X,X'}1(r(X,X')>0) \\
    &= \Expect_{X,X'}\Prob{T>T'|X,X'}1(r(X,X')< 0) + \Expect_{X, X'}(1 - \Prob{T>T'|X,X'})1(r(X,X')>0)
\end{align*}
where $r(x, x') = f(x) - f(x')$. Under \cref{ass:C}, if $f^\star$ satisfies \eqref{eq:fullrankingcondition-app}, then the largest term of the right hand side of the above equation is non-zero, leading to the result. 
\end{proof}

\subsection{Proof of Proposition 3.6}
We first recall the definition of the exponential family we presented in the main paper.
\begin{definition}[Exponential family survival model]
For $\theta:\calX\to\Rspace{}$, $\beta:\Rspace{}_+\to\Rspace{}_+$, $\tau:\Rspace{}_+\to\Rspace{}$, $\eta:\Rspace{}\to \Rspace{}$, and $A:\Rspace{}\to\Rspace{}$ such that, for all $x\in\calX$, the conditional density is a curved exponential family model
\begin{equation} \label{eq:exponential-survival-app}
    \mu(t|x) = \beta(t)\exp\left[\eta\circ\theta(x)\tau(t) - A\circ\theta(x)\right],
\end{equation}
with associated parameter $\theta(x)$. For instance, $\theta(x) = \theta^\top x$ in a generalized linear model.  
\end{definition}
Recall also the statement of Proposition 3.6
\begin{customprop}{3.6}
    Under \Cref{ass:C}, with $\theta$ continuous, $\beta$ positive, $\tau$ non-decreasing and $\eta$ continuously differentiable and non-decreasing, $\theta(x)$ is an optimal risk ordering for the C-index, thus
    $C_\mu(\theta) = C_\mu^{\star}$.
\end{customprop}
\begin{proof}
We need to prove that, for all $x,x'\in\mathcal{X}$, 
    $$\eta\circ\theta(x)\geq \theta(x') \Rightarrow \Prob{T>T'|x,x'} \geq \frac{1}{2}.$$
    We start by showing that, under the assumptions of Proposition~\ref{prop:exp-fam}, the scalar exponential family (conditional) distribution $\mu(t|x) = \rho(t)e^{\eta\circ \theta(x)\tau(t) - A\circ\theta(x)}$ satisfies the Monotone Likelihood Ratio (MLR) property, i.e., for all $x,x'\in\mathcal{X}$ such that $\theta(x)\geq \theta(x')$ and for all $t_1\geq t_0$,
    \begin{equation}
        \label{eq:MLR-app}
        \frac{\mu(t_1|x)}{\mu(t_1|x')}\geq \frac{\mu(t_0|x)}{\mu(t_0|x')}.
    \end{equation}
    Indeed, using the canonical form
    \begin{equation*}
    \mu(t|x) = \beta(t)\exp\left[\eta\circ\theta(x)\tau(t) - A\circ\theta(x)\right],
\end{equation*} 
we have that, for all $x,x'\in\mathcal{X}$ such that $\theta(x)\geq \theta(x')$ and for all $t_1\geq t_0$
    \begin{multline*}
        \frac{\mu(t_1|x)}{\mu(t_1|x')} = \frac{\mathrm{e}^{A\circ\theta(x')}}{\mathrm{e}^{A\circ\theta(x)}}\exp\left\{\tau(t_1)[\eta\circ\theta(x)-\eta\circ\theta(x')]\right\} 
         \geq \frac{\mathrm{e}^{A\circ\theta(x')}}{\mathrm{e}^{A\circ\theta(x)}}\exp\left\{\tau(t_0)[\eta\circ\theta(x)-\eta\circ\theta(x')]\right\} 
        = \frac{\mu(t_0|x)}{\mu(t_0|x')},
    \end{multline*}
    where we have used that $\tau(t_1)\geq \tau(t_0)$ and $\eta\circ\theta(x)\geq\eta\circ\theta(x')$ by assumption to obtain the inequality of the third line.
    
    We now prove that inequality~\eqref{eq:MLR-app} implies $\Prob{T>T'|x,x'}\geq \frac{1}{2}$. Indeed,
    
    \begin{align*}
        \Prob{T>T'|x,x'} = \int_0^{+\infty}\left[\int_{t_0}^{+\infty}\mu(t_1|x)\mu(t_0|x')dt_1\right]dt_0 
        \geq \int_0^{+\infty}\left[\int_{t_0}^{+\infty}\mu(t_1|x')\mu(t_0|x)dt_1\right]dt_0
        = \Prob{T'>T|x,x'} 
    \end{align*}
\end{proof}

\subsection{Proof of Proposition 3.7}
We consider the following Weibull model with varying shape parameter:
\begin{equation}\label{eq:weibull-model-app}
    S(t|x) = e^{-t^{f(x)}}.
\end{equation}
Recall the statement of Proposition 3.7
\begin{customprop}{3.7}
    The Weibull model \eqref{eq:weibull-model-app} satisfies \cref{ass:C} but not \cref{ass:B}.
\end{customprop}
\begin{proof}
We have the following identity:
\begin{equation*}
    \Prob{T<T'} = \int_{0}^{+\infty}\exp^{-t^{\alpha_2}}d(1 - \exp^{-x^{\alpha_1}}) = \int_{0}^{+\infty}\exp^{-u^{\alpha_2/\alpha_1} - u}du = I(\alpha_2/\alpha_1),
\end{equation*}
where we have done the change of variables $u = x^{\alpha_1}$. Now we follow the proof of Proposition 1 by \citet{lebedev2019nontransitivity} and show that $I(\beta) \geq \frac{1}{2}$ if and only if $\beta \geq 1$.  The integrand is increasing in $\beta$ for $0<\beta<1$ and decreasing for $\beta>1$. Let $\beta\geq 2$, then
\begin{equation*}
    I(\beta) > \int_{0}^1\exp^{-u^{\beta} - u}du \geq \int_{0}^1\exp^{-u^2 - u}du = \sqrt{\pi}e^{1/4}\left(\Phi\Big(\frac{3\sqrt{2}}{2}\Big) - \Big(\frac{\sqrt{2}}{2}\Big)\right) \approx 0.507 > 1/2,
\end{equation*}
where $\Phi$ is the c.d.f of a standard Gaussian random variable. 
Let $1<\beta<2$. Then the function $u^{\beta - 1}, \beta>0$, is concave and its graph lies below the tangent at $\beta = 1$:
\begin{equation*}
    u^{\beta -1} \leq (\beta - 1)u + (2 - \beta).
\end{equation*}
As a result,
\begin{equation*}
    u^{\beta} \leq (\beta - 1)u^2 + (2 - \beta)u,
\end{equation*}
which can be used for estimating the integral as:
\begin{equation*}
    I(\beta) \geq \int_{0}^{\infty}\exp^{-(\beta - 1)u^2 - (3 - \beta)u}du = 
    \sqrt{\frac{\pi}{\beta - 1}}\exp^{\frac{(3 - \beta)^2}{4(\beta - 1)}}\left(1- \Phi\Big(\frac{3 - \beta}{\sqrt{2(\beta - 1)}}\Big)\right).
\end{equation*}
Plotting this function (see Fig. 3 of \citet{lebedev2019nontransitivity}) we can clearly see how it exceeds 1/2. Moreover, this inequality turns into equality at the limits of the interval \citep{steinhaus1959paradox}. We have thus proved that the shape parameter $f(x)$ is an optimal risk ordering under the Weibull model \eqref{eq:weibull-model-app}, thus \cref{ass:C} is satisfied. On the other hand, \Cref{ass:B} is not satisfied as the expectation of a Weibull is not monotone on the shape parameter. Indeed, the expectation is given by $\Gamma(1 + \frac{1}{f(x)})$, where $\Gamma$ denotes the Gamma function. This function has a minimum between $1.46$ and $1.47$, decreasing first and increasing for larger values. Thus, it gives a different ranking than the optimal risk ordering $f$, and consequently it does not define an optimal risk ordering. \Cref{ass:B} is consequently not satisfied. 
\end{proof}

\subsection{Proof of Proposition 3.8}
Recall the statement of Proposition 3.8.
\begin{customprop}{3.8}
    Under \Cref{ass:D}, the maximizer of the C-index depends on the marginal distribution of the patients covariates $\mu(x)$. 
\end{customprop}

\begin{proof}
As shown in Section 2.1, the maximiser of the C-index can be written as the solution of the following problem:
\begin{equation*}
    \max_{f}~\Expect_{X,X'}{\Prob{T>T'|X,X'}1(f(X)<f(X'))}.
\end{equation*}
Assume for simplicity that the marginal population has support in a finite set of elements $x_1, \ldots, x_n$. Then the optimal ranking of these $n$ points is the solution of the following combinatorial problem over the set of permutations $\calS_n$ of size $n$:
\begin{equation*}
    \max_{\sigma\in\calS_n}~\sum_{i,j=1}^n\gamma_{ij}1(\sigma_i < \sigma_j),
\end{equation*}
where $\gamma_{ij} = p_ip_j\Prob{T>T'|x_i, x_j}$.
This corresponds to the Minimum Weighted Feedback Arc Set (MWFAS) problem with weights $\gamma_{ij}$. Thus, its solution clearly depends on the factor $p_ip_j$ under no further assumption. 
\end{proof}

\newpage
\section{PROOF OF RESULTS FROM SECTION 4}\label{app:sec2}

\subsection{Proof of Theorem 4.1}

We first recall below the definitions of strongly convex functions and of the Fenchel-Young conjugate.
\begin{definition}[Strongly convex]
A twice-differentiable strongly convex function $\Omega$ with parameter $\beta$ defined in a convex domain $\calC\subset\Rspace{}$ is a function satisfying $\Omega{''}(u)\geq \beta$ for all $u$ in the domain $\calC$.
\end{definition}

\begin{definition}[Fenchel-Young conjugate \citep{rockafellar1997convex}] \label{def:fenchel-conjugate}
Let $\Omega$ a function with domain $\calC$. The Fenchel-conjugate $\Omega^{*}$ of $\Omega$ is a convex function defined as
\begin{equation*}
    \Omega^{*}(v) = \sup_{u\in\calC}~vu - \Omega(u). 
\end{equation*}
If $\Omega$ is differentiable, then the domain of $\Omega^*$ corresponds to the image of the gradient $\Omega'$ union the subgradients at the boundary of the domain $\calC$.
If $\Omega$ is strongly convex, then $\Omega^{*}$ is continuously differentiable. 
\end{definition}
Recall now the statement of Theorem 4.1
\begin{customthm}{4.1}
    Let $\Omega:\calC\rightarrow\Rspace{}$ be a twice-differentiable \textit{strongly convex} function defined in a closed convex domain $\calC\supseteq \Rspace{}_{+}$ such that $\lim_{|u|\to\infty}\Omega'(u) = +\infty$ where the limit is taken also to the negative real numbers if $\calC$ is unbounded in this direction. Define the cost function $S(v, t) = \Omega^{*}(v) - vt$ where $\Omega^{*}$ is the Fenchel conjugate of $\Omega$ \citep{rockafellar1997convex}. Then, the following risk
\begin{equation}\label{eq:cost-function-conditional-app}
    \calR(f) = \Expect_{(X, U, \Delta)}~\frac{\Delta S(f(X), U)}{G(U)},
\end{equation}
is convex, smooth, and its minimizer is a monotone transformation of the conditional expectation. Thus, under \cref{ass:B} it is Fisher consistent to the C-index. 
\end{customthm}
\begin{proof}
Let's first consider the case without censoring where we observe the pair of random variables $(X, T)$. We want to prove that the minimiser $f^\star$ of the risk $\calR(f) = \Expect_{(X, T)}~S(f(X), T)$ where $S(v, t) = \Omega^{*}(v) - vt$ is a monotone transformation of the conditional expectation. The loss function $S$ has the following properties:
\begin{itemize}
    \item[-] The loss $S$ is a Fenchel-Young loss function \citep{blondel2019learning} with domain $\Rspace{}$ (i.e., $S:\Rspace{}\times\Rspace{}_{+}\rightarrow\Rspace{}$), which follows from the property $\lim_{u\to +\infty}\nabla\Omega(u) = +\infty$ (and also $\lim_{u\to -\infty}\nabla\Omega(u) = +\infty$ if $\calC$ is unbounded to the negative values). indeed for any $v$, there exists $u$ such that $v\in\partial\Omega(u)$ if $u\in\calC$, where $\partial\Omega(u)$ stands for the subgradient of $\Omega$ at point $u$, which equals just the gradient at the interior of the domain $\calC$. Thus the domain of $\Omega^{*}$ is the real line (see \cref{def:fenchel-conjugate}). 
    \item[-] $S$ is smooth as $\Omega$ is strongly convex \citep{rockafellar1997convex} (see also \cref{def:fenchel-conjugate}).
    \item[-] Its minimizer $f^\star$ can be written as $f^\star(x) = \Omega'(\operatorname{CE}(x))$, where $\operatorname{CE}(x) = \Expect{}\{T|X=x\}$ stands for the conditional expectation \citep{blondel2019learning}. Note that the smooth transformation $\Omega':\calC\supseteq\Rspace{}_{+}\rightarrow\Rspace{}$ is monotone due to the convexity of $\Omega$.
\end{itemize} 
When censoring is present, we can use Inverse Censoring Probability Weighting (ICPW) \citep{robins2000IPCW} to construct an unbiased estimate of the risk 
\begin{align*}
    \calR(f) = \Expect_{(X,T)}S(f(X), T) &= \Expect_{(X, T)}\frac{\Expect{}\{ 1(T\leq C)|X\}S(f(X), T)}{\Expect{}\{1(T\leq C)|X\}} \\
    &= \Expect{}_{X}\Expect{}_{T|X}\Expect{}_{C|X}\frac{1(T\leq C)S(f(X), T)}{G(U)} \\
    &= \Expect{}_{(X, U, \Delta)}\frac{\Delta S(f(X), T)}{G(U)},
\end{align*}
where $G(u) = \Prob{C\geq u}$ stands for the censoring survival function. 
\end{proof}

\subsection{Proof of Theorem 4.2}
Recall the statement of Theorem 4.2
\begin{customthm}{4.2}[Excess risk bounds]
    Let $f^\star$ be an optimal risk ranking satisfying \eqref{eq:fullrankingcondition-app}. Let $L> 0$ a positive constant satisfying 
    \begin{equation}\label{eq:bound-L-app}
        |2\Prob{T>T'|x, x'} - 1| \leq L|f^\star(x) - f^\star(x')|, 
    \end{equation}
    for all pairs $x, x'$. Then, the excess risk of the C-index can be bounded as 
    \begin{equation*}
        C(f^\star) - C(f) \leq L\Expect_{X}|f(X) - f^\star(X)|.
    \end{equation*}
\end{customthm}

The result will be based on a reduction to a binary classification problem using the following \cref{lem:binary-classif}.
\begin{lemma}[Theorem 2.2 of \citet{devroye2013probabilistic}]\label{lem:binary-classif}
Let $Y\in\{-1, 1\}$ a binary random variable, and consider the binary classification risk $\calE(r) = \Expect_{Z, Y}~1(r(Z)Y < 0)$. Then, we have that 
\begin{equation*}
    \calE(r) - \calE^\star = \Expect_{Z}|2\Prob{Y|Z}-1|1(r(Z)r^\star(Z)< 0),
\end{equation*}
for any measurable function $r:\calZ\rightarrow\Rspace{}$.
\end{lemma}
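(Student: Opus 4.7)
The plan is to reduce the claim to a pointwise statement by conditioning on $Z$, and then to compare the conditional $0$-$1$ risks of $r$ and of any risk-minimising (Bayes-optimal) classifier $r^\star$. The key observation I want to extract is that the excess conditional risk at a point $z$ is determined entirely by whether the two classifiers agree in sign there; the weight at a disagreement point will turn out to be exactly $|2\Prob{Y=1|Z=z}-1|$, matching the integrand on the right-hand side.

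First I would set $\eta(z)\defeq\Prob{Y=1|Z=z}$. Using $Y\in\{-1,1\}$, a one-line computation gives
\begin{equation*}
\Expect\{1(r(Z)Y<0)\mid Z=z\}=\eta(z)\,1(r(z)<0)+(1-\eta(z))\,1(r(z)>0),
\end{equation*}
from which it is immediate that this pointwise conditional risk is minimised by any $r^\star$ whose sign matches $2\eta(z)-1$, attaining the value $\min(\eta(z),1-\eta(z))$. Next I would subtract the two conditional risks and carry out the following case split at each $z$: when the signs of $r(z)$ and $r^\star(z)$ agree the conditional risks coincide and the pointwise excess vanishes; otherwise the excess equals $\max(\eta(z),1-\eta(z))-\min(\eta(z),1-\eta(z))=|2\eta(z)-1|$. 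Both regimes are captured jointly by the indicator $1(r(z)r^\star(z)<0)$, so applying the tower property will deliver the claimed identity.

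The main (and admittedly minor) obstacle will be the behaviour on the tie sets $\{r=0\}$, $\{r^\star=0\}$ and $\{\eta=1/2\}$. On $\{\eta=1/2\}$ the weight $|2\eta(z)-1|$ vanishes, so any measurable choice of sign for $r^\star$ leaves the identity intact. On the remaining tie sets the indicator $1(rr^\star<0)$ is zero, and a brief case check under the convention $1(0<0)=0$ shows the pointwise excess is zero there as well, so nothing is lost. Once this is handled the identity follows without any further estimates.
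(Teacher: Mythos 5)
The paper does not prove this lemma at all: it is imported verbatim as Theorem~2.2 of \citet{devroye2013probabilistic}, so there is no internal proof to compare against. Your argument is the standard textbook proof of that theorem --- condition on $Z$, observe that the conditional risk is $\eta(z)1(r(z)<0)+(1-\eta(z))1(r(z)>0)$ with $\eta(z)=\Prob{Y=1|Z=z}$, note that the pointwise excess over the Bayes rule is $0$ when the signs of $r(z)$ and $r^\star(z)$ agree and $|2\eta(z)-1|$ when they disagree, and integrate. That is exactly the right route and matches how the lemma is actually used later in the paper (where $r$ is sign-valued).

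One concrete point in your tie-set discussion is wrong, although the blame lies mostly with the statement as transcribed here rather than with you. You claim that on $\{r=0\}$ the pointwise excess vanishes under the convention $1(0<0)=0$. It does not: if $r(z)=0$ then $\Expect\{1(r(Z)Y<0)\mid Z=z\}=0$, whereas the Bayes rule pays $\min(\eta(z),1-\eta(z))$, so the pointwise excess equals $-\min(\eta(z),1-\eta(z))$, which is strictly negative whenever $\eta(z)\notin\{0,1\}$, while the right-hand side's indicator $1(r(z)r^\star(z)<0)$ is $0$ there. In other words, under the strict-inequality loss the trivial function $r\equiv 0$ incurs zero risk and the identity (indeed the very definition of $\calE^\star$ as an infimum over measurable real-valued $r$) breaks down. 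The original theorem in \citet{devroye2013probabilistic} is stated for $\{0,1\}$-valued classifiers with the loss $1(g(Z)\neq Y)$, where no ties occur. Your proof is correct once you either restrict to $r$ that is almost surely nonzero (equivalently, identify $r$ with its sign in $\{-1,1\}$), or replace the loss by $1(r(Z)Y\leq 0)$ on $\{r=0\}$; you should state that restriction explicitly rather than assert that the tie set is harmless.
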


    The proof now consists in lifting the C-index ranking problem in $(X, T)$ into a binary classification problem to apply \cref{lem:binary-classif}. Let $Z = (X, X')$ and $Y = \operatorname{sign}(T - T')\in\{-1, 1\}$ and consider the binary classification risk $\calE(r) = \Expect_{Z, Y}~1(r(Z)Y < 0)$ where $r:\calX\times\calX\rightarrow\{-1, 1\}$. Note that the risk $\calE$ is equal to the C-index if we set 
    \begin{equation*}
        r(z) = r((x, x')) = 1(f(x) \geq f(x')).
    \end{equation*}
    This can be easily seen from
    \begin{align*}
        C(f) &= \Expect{}1(f(X) > f(X'), T > T') \\
        &= \frac{1}{2}\Expect{}1((f(X) - f(X'))(T-T') < 0) \\
        &= \frac{1}{2}\Expect{}1(r(Z)Y < 0) = \frac{1}{2}\calE(r).
    \end{align*}
    Moreover, as the maximizer $r^\star$ of the binary classification risk is known to be $r^\star(z) = 1(2\Prob{Y=1|z} \geq 1)$ and $f^\star$ satisfies the optimal risk ranking condition \eqref{eq:fullrankingcondition-app}, we also have $r^\star(z) = 1(f^\star(x) \geq f^\star(x'))$. Thus, applying \cref{lem:binary-classif} we obtain that 
    \begin{equation*}
    C^\star - C(f) = \frac{1}{2}\Expect_{X, X'}\{|2\Prob{T> T'|x, x'} - 1|r_{X, X'}(f, f^\star)\},
    \end{equation*}
    where $r_{x, x'}(f, f^\star) = 1((f^\star(x) - f^\star(x'))(f(x) - f(x'))<0)$. Using now~\eqref{eq:bound-L-app} we obtain 
    \begin{align*}
        &\{|2\Prob{T> T'|x, x'} - 1|r_{x, x'}(f, f^\star)\} \\
        &\leq  L|f^\star(x) - f^\star(x')|r_{x, x'}(f, f^\star)\} \\
        &\leq L |f^\star(x) - f(x) - (f^\star(x') - f(x'))| \\
        & \leq L (|f^\star(x) - f(x)| + |f^\star(x') - f(x')|).
    \end{align*}
    where we have used that $ab\leq 0 \implies |a| \leq |a - b|$ for $a, b\in\Rspace{}$. Finally, taking the expectation over the i.i.d. pairs $X, X'$ on both sides of the inequality we obtain the desired result.

\subsection{Proof of Proposition 4.3}

Recall the statement of Proposition 4.3.
 \begin{customprop}{4.3}
     Condition \eqref{eq:bound-L-app} is satisfied by:
     \begin{itemize}
         \item[(i)] The PH model with $L=1$.
         \item[(ii)] The AFT model with $L$ the Lipschitz constant of the cumulative distribution function of the symmetric random variable $\varepsilon - \varepsilon'$.
\item[(iii)] The AFT-H model \eqref{eq:aft-h-model-app} with $L$ is the Lipschitz constant of the cumulative distribution function of $\varepsilon$ scaled by a factor $a$ where $\sigma(x) \geq 1/a$ for all $x$. 
        \item[(iv)] The exponential family.
    \end{itemize}
\end{customprop}

\paragraph{Cox model (i).}
In the Cox PH model, condition~\eqref{eq:bound-L-app} of \Cref{th:excess-risk-bound} is satisfied with $L= 1$. This comes from the identity 
\begin{align*}
    \Prob{T> T'|x, x'} &= \int_{0}^{+\infty}h(t|x)S(t|x)S(t|x')dt \\
    &= \int_{0}^{+\infty}h_0(t)e^{f(x)}e^{-\int_{0}^th_0(\tau)d\tau (e^{f(x)} + e^{f(x')})}dt =\frac{e^{f(x)}}{e^{f(x)} + e^{f(x')}},
\end{align*}
where at the last identity we have done the change of variables $u = -\int_{0}^th_0(\tau)d\tau$. The final results follows from:
\begin{align*}
    |2\Prob{T> T'|x, x'}-1|= \frac{|e^{f(x)}-e^{f(x')}|}{e^{f(x)}+e^{f(x')}} =\frac{\max(e^{f(x)}, e^{f(x')})}{e^{f(x)}+e^{f(x')}}|f(x) - f(x')| \leq |f(x) - f(x')|.
\end{align*}

\paragraph{AFT model (ii). }
Under the AFT model we have the following identity:
\begin{equation*}
    \Prob{T>T'|x, x'} = \Prob{\log T>\log T'|x, x'} =
    \Prob{f(x) - f(x') > \varepsilon - \varepsilon'} =
    F_{\varepsilon - \varepsilon'}(f(x) - f(x')),
\end{equation*}
where $F_{\varepsilon - \varepsilon'}$ is the cumulative distribution function of the symmetric random variable $\varepsilon - \varepsilon'$. Thus, inequality \eqref{eq:bound-L-app} is satisfied with the Lipschitz constant of the $F_{\varepsilon - \varepsilon'}$.

\paragraph{AFT-H model (iii). } 
Note that the symmetric random variable $\sigma(x)\varepsilon - \sigma(x')\varepsilon'$ follows the same law as $\sqrt{(\sigma(x)^2 + \sigma(x')^2)}\varepsilon$. Using this and performing the same computations as for the AFT model, we obtain the following identity:

\begin{equation*}
    |2\Prob{T>T'|x, x'} - 1| = F_{\varepsilon}\left(\frac{f(x) - f(x')}{\sqrt{\sigma(x)^2 + \sigma(x')^2}}\right).
\end{equation*}
If $\sigma(x) \geq a > 0$ for all $x$, then 
\begin{equation*}
    F_{\varepsilon}\left(\frac{f(x) - f(x')}{\sqrt{\sigma(x)^2 + \sigma(x')^2}}\right) \leq L\left|\frac{f(x) - f(x')}{\sqrt{\sigma(x)^2 + \sigma(x')^2}}\right| \leq \frac{L}{\sqrt{2}a}|f(x) - f(x')|.
\end{equation*}

\paragraph{Exponential family model (iv).} Assume the conditional time-to-event density writes, for $x\in\calX$,
$$\mu(t|x) = \beta(t)\exp\left[\eta\circ\theta(x)\tau(t)-A\circ\theta(x)\right],$$
where $\theta$ and $A$ are continuous functions, and $\eta$ and $\tau$ are continuous, non-decreasing functions. Assume further that $\calX$ is compact (this assumption is satisfied, e.g., if all covariates are bounded). Note that, by continuity of the application $\theta$, $\theta(\calX)$ is also compact; without loss of generality, we assume $\Theta=\theta(\calX)$. Then, the following Proposition holds.
\begin{proposition}
    There exists $L>0$ such that, for all $(x,x')\in\calX^2$,
    $$\left|2\Prob{T>T'|x,x'}-1\right|\leq L\left|\theta(x)-\theta(x')\right|.$$
\end{proposition}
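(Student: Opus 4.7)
The plan is to view the conditional probability $\Prob{T>T'|x,x'}$ as a smooth function of the scalar pair $(\theta(x),\theta(x'))$ living in the compact product $\Theta\times\Theta\subset\Rspace{2}$, to exploit that this function equals $\tfrac{1}{2}$ on the diagonal, and to convert this into a Lipschitz bound via the mean value theorem.

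Concretely, I would first introduce
$$g(\theta_1,\theta_2) := \int_0^{\infty}S(t;\theta_1)\,\mu(t;\theta_2)\,dt,$$
where $\mu(\cdot;\theta)$ denotes the exponential family density from \eqref{eq:exponential-survival-app} reparameterized by the scalar parameter $\theta$, and $S(\cdot;\theta)$ is the associated survival function. Then $g(\theta_1,\theta_2)=\Prob{T_1>T_2}$ for independent $T_1,T_2$ with parameters $\theta_1,\theta_2$, and $g(\theta,\theta)=\tfrac{1}{2}$ since both variables are continuous and identically distributed. The identity
$$2g(\theta_1,\theta_2)-1 \;=\; 2\bigl[g(\theta_1,\theta_2)-g(\theta_2,\theta_2)\bigr]$$
reduces the problem to bounding a difference along the first coordinate only.

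Second, I would establish that $g\in C^{1}(\Theta\times\Theta)$ under the regularity already used in \Cref{prop:exp-fam} (in particular $\eta$ continuously differentiable, and $A$ smooth via its definition as the log partition function). Differentiating the density yields
$$\partial_{\theta}\mu(t;\theta)\;=\;\mu(t;\theta)\,\eta'(\theta)\bigl[\tau(t)-\Expect_{\theta}\tau(T)\bigr],$$
using $A'(\theta)=\eta'(\theta)\,\Expect_{\theta}\tau(T)$, which is obtained by differentiating the normalization identity $\int\mu(t;\theta)\,dt=1$. Once differentiation under the integral sign is justified, $\partial_1 g$ is continuous on the compact set $\Theta\times\Theta$ and hence bounded by some $L_0$; the mean value theorem applied to $\theta_1\mapsto g(\theta_1,\theta_2)$ along the segment from $\theta(x')$ to $\theta(x)$ gives
$$|g(\theta(x),\theta(x'))-g(\theta(x'),\theta(x'))|\;\leq\;L_0\,|\theta(x)-\theta(x')|,$$
which combined with the above identity yields the conclusion with $L=2L_0$.

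The main obstacle is the dominated-convergence step that legitimizes differentiating $g$ under the integral: one must produce an integrable function of $t$ dominating $\mu(t;\theta)\,|\tau(t)|$ uniformly in $\theta\in\Theta$. This is a standard property of regular exponential families whenever $\eta(\Theta)$ lies in the interior of the natural parameter space; there, all moments of $\tau(T)$ are finite and continuous in $\theta$, and a routine comparison of the integrand with the density at a slightly enlarged parameter (using that $|\tau(t)|\,e^{-\delta|\tau(t)|}$ is bounded for any $\delta>0$) supplies the required envelope, uniform over the compact set $\Theta$. Everything else is a straightforward application of the mean value theorem.
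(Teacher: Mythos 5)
Your proof is correct and follows essentially the same route as the paper's: both reduce the claim to Lipschitz continuity of $(\theta_1,\theta_2)\mapsto\int_0^\infty S(t;\theta_1)\mu(t;\theta_2)\,dt$ on the compact set $\Theta\times\Theta$ with $\Theta=\theta(\calX)$, via differentiation under the integral sign justified by dominated convergence, anchored at the diagonal value $\tfrac{1}{2}$. The only differences are presentational: you bound $\partial_1 g$ uniformly on the compact product in one step, whereas the paper first obtains a constant $L_{\theta_0}$ for each fixed $\theta_0$ and then separately argues that these constants are uniformly bounded; and your explicit identification of the hypothesis that $\eta(\Theta)$ lie in the interior of the natural parameter space (needed to build the integrable envelope) is, if anything, more careful than the paper's treatment of that step.
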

\begin{proof}
Consider the function $G:\Theta\to \Rspace{}_+$ defined by
$$G_{\theta_0}(\theta) = \int_0^{+\infty}S(t;\theta_0)\mu(t;\theta)dt,$$
where, for all $\theta\in\Theta$,
$$\mu(t;\theta) = \beta(t)\exp\left[\eta(\theta)\tau(t)-A(\theta)\right]$$
and
$$S(t;\theta_0) =  \int_{t}^{+\infty}\beta(t_1)\exp\left[\eta(\theta_0)\tau(t)-A(\theta_0)\right]dt_1.$$
First, $t\mapsto S(t;\theta_0)\mu(t;\theta)$ is integrable w.r.t. the variable $t$. Second, $\theta\mapsto S(t;\theta_0)\mu(t;\theta)$ is continuously differentiable w.r.t. the variable $\theta$. Third, since $\Theta$ is compact and $\eta, A$ are continuous, $\eta_-\leq \eta(\theta)\leq \eta_+$ and $A_-\leq A(\theta)\leq A_+$ for all for all $\theta\in\Theta$; we obtain that for all $t\in\Rspace{}_+$ and for all $\theta\in\Theta$, 
\begin{equation*}
    S(t;\theta_0)\mu(t;\theta) \leq S(t;\theta_0)\beta(t)\exp\left[\max\{\eta_-\tau(t), \eta_+\tau(t)\}-A_-\right].
\end{equation*}
Thus, by the dominated convergence theorem, $G_{\theta_0}$ is differentiable w.r.t. $\theta$, and 
$$G_{\theta_0}'(\theta) = \int_0^{+\infty}S(t;\theta_0)\mu'(t;\theta)dt.$$

We now use similar arguments to show that $G_{\theta_0}'$ is continuous. First, $t\mapsto S(t;\theta_0)\mu'(t;\theta)$ is integrable w.r.t. the variable $t$. Second, $\theta\mapsto S(t;\theta_0)\mu'(t;\theta)$ is continuous w.r.t. the variable $\theta$. Third, since $\Theta$ is compact and $\eta, A$ are continuous, $\eta_-\leq \eta(\theta)\leq \eta_+$ and $A_-\leq A(\theta)\leq A_+$ for all for all $\theta\in\Theta$; we obtain that for all $t\in\Rspace{}_+$ and for all $\theta\in\Theta$,

\begin{align}
    \label{eq:unif-cvg-1}
    S(t;\theta_0)\mu(t;\theta) &\leq
    S(t;\theta_0)\beta(t)\left(\max\{\eta_-\tau(t), \eta_+\tau(t)\}-A_-\right)\\
    &\times\exp\left(\max\{\eta_-\tau(t), \eta_+\tau(t)\}-A_-\right).
\end{align}
By the dominated convergence theorem, $G_{\theta_0}'$ is continuous w.r.t. $\theta$. We obtain that $G_{\theta_0}$ is continuously differentiable w.r.t. $\theta$ on a bounded domain. It is thus Lipschitz continuous, and there exists a positive constant $L_{\theta_0}$ such that, for all $\theta, \tilde\theta \in \Theta$,
$$|G_{\theta_0}(\theta) - G_{\theta_0}(\tilde\theta)|\leq L_{\theta_0}|\theta-\tilde\theta|,$$
where $|\cdot|$ denotes the Euclidean norm on $\Theta$. 

The final step consists in proving that the family of constants $\{L_{\theta_0}\}_{\theta_0\in\Theta}$ are uniformly bounded. To do so, it suffices to show that the derivative $G_{\theta_0}'(\theta)$ is bounded uniformly for all $\theta_0$ and for all $\theta$. Using~\eqref{eq:unif-cvg-1}, we obtain that 

\begin{align*}
    G_{\theta_0}'(\theta) & \leq 
    \int_{0}^{+\infty}\{S(t;\theta_0)\beta(t)\left(\max\{\eta_-\tau(t), \eta_+\tau(t)\}-A_-\right)\\
    & \times\exp\left(\max\{\eta_-\tau(t), \eta_+\tau(t)\}-A_{-}-\right)\}dt.
\end{align*}
Noticing that 
\begin{equation*}
    S(t;\theta_0)\leq 
    \int_{t}^{+\infty}\beta(t)\exp\left[\max\{\eta_-\tau(t_1), \eta_+\tau(t_1)\}-A_-\right]dt_1.
\end{equation*}
completes the proof.
\end{proof}

\subsection{Proof of Theorem 4.5}
Recall the statement of Theorem 4.5
\begin{customthm}{4.5}
   Let $\calR$ be the risk defined in \eqref{eq:cost-function-conditional-app}.  Under \cref{ass:B}, the following inequality holds:
    \begin{equation*}
        C^\star - C(f) \leq 2L\gamma\sqrt{\calR(f) - \calR^\star},
    \end{equation*}
    where $\Omega''(u)\geq 1/\gamma^{2}$ for all $u$ in the domain $\calC$.
\end{customthm}
\begin{proof}
We can safely assume there is no censoring ($\calR(f) = \Expect{}_{(X, T)}S(f(X), T)$); if there is censoring we can just use the ICPW technique described in the proof of Theorem 4.1 to write the expectation in terms of $(X, U, \Delta)$. We assume that \eqref{eq:bound-L-app} is satisfied by the conditional expectation $\operatorname{CE}$ as
$|2\Prob{T>T'|x, x'} - 1| \leq L|\operatorname{CE}(x) - \operatorname{CE}(x')|$,
so applying \cref{th:excess-risk-bound} we obtain 
 \begin{equation*}
    C(f^\star) - C(h) \leq L\Expect_{X}|h(X) - \operatorname{CE}(X)|.
\end{equation*}

The result is based on the excess risk bound proof by \cite{nowak2019general}. If $S$ is a Fenchel-Young loss \cite{blondel2019learning} then we can write 
\begin{equation*}
    \calR(f) - \calR^\star = \Expect_{X}~D_{\Omega}(\operatorname{CE}(X), \nabla\Omega^*(f(X))),
\end{equation*}
where $D_{\Omega}(u, v) = \Omega(u) - \Omega(v) - \nabla \Omega(v)'(u - v)$ is the \textit{Bregman divergence} of the convex function $\Omega$ at the pair of points $u, v$ \citep{blondel2019learning}. Moreover, if $\Omega$ is $1/\gamma^2$ strongly convex (i.e., $\Omega''\geq 1/\gamma^2$), then 
\begin{equation*}
    (u - v)^2 \leq 2\gamma^2 D_{\Omega}(u, v).
\end{equation*}
Then, the final result follows from Cauchy-Schwartz:
 \begin{align*}
    C(f^\star) - C(f) &= C(f^\star) - C(\nabla \Omega^*(f)) \\
    &= \leq L\Expect_{X}|\nabla \Omega^*(f(X)) - \operatorname{CE}(X)| \\
    &\leq L\sqrt{\Expect_{X}|\nabla \Omega^*(f(X)) - \operatorname{CE}(X)|^2} \\
    &\leq \sqrt{2}L\gamma\sqrt{\Expect_{X}D_{\Omega}(\operatorname{CE}(X), \nabla\Omega^*(f(X)))} = \sqrt{2}L\gamma\sqrt{\calR(f) - \calR^\star}.
\end{align*}
\end{proof}

\end{document}